\let\oldnl\nl
\newcommand{\nonl}{\renewcommand{\nl}{\let\nl\oldnl}}
\theoremstyle{definition}
\newtheorem{definition}{Definition}
\newtheorem{assumption}{Assumption}
\newtheorem{construction}{Construction}
\newtheorem{example}{Example}
\theoremstyle{remark}
\newtheorem{remark}{Remark}
\theoremstyle{plain}
\newtheorem{thm}{Theorem}
\newtheorem{lem}[thm]{Lemma}
\newtheorem{prop}[thm]{Proposition}
\newtheorem{cor}[thm]{Corollary}
\title{A Real-Time Rescheduling Algorithm for Multi-robot Plan Execution}
\author {
    Ying Feng\textsuperscript{\rm 1},
    Adittyo Paul\textsuperscript{\rm 1},
    Zhe Chen\textsuperscript{\rm 2},
    Jiaoyang Li\textsuperscript{\rm 1}
}
\begin{document}

\maketitle

\begin{abstract}
One area of research in multi-agent path finding is to determine how replanning can be efficiently achieved in the case of agents being delayed during execution. One option is to reschedule the passing order of agents, i.e., the sequence in which agents visit the same location. In response, we propose Switchable-Edge Search (SES), an A*-style algorithm designed to find optimal passing orders. We prove the optimality of SES and evaluate its efficiency via simulations. The best variant of SES takes less than 1 second for small- and medium-sized problems and runs up to 4 times faster than baselines for large-sized problems.
\end{abstract}

\section{Introduction}

Multi-Agent Path Finding (MAPF) is the problem of finding collision-free paths that move a team of agents from their start to goal locations. 
MAPF is fundamental to numerous applications,
such as automated warehouses~\cite{wurman2008coordinating,meng2019idle},  computer games~\cite{silver2005cooperative}, and drone swarms~\cite{honig2018large}.

Classic MAPF models assume flawless execution. 
However, in real-world scenarios, agents may encounter unexpected delays due to mechanical differences, unforeseen events, localization errors, and so on. 
To accommodate such delays, existing research suggests the use of a Temporal Plan Graph (TPG) \cite{honig2016}. 
The TPG captures the precedence relationships within a MAPF solution and maintains them during execution. Each precedence relationship specifies an order for two agents to visit the same location. 
An agent advances to the next location in its path only if the corresponding precedence conditions are met. 
Consequently, if an agent experiences a delay, all other agents whose actions depend on this agent will pause. 
Despite its advantages of providing rigorous guarantees on collision-freeness and deadlock-freeness, the use of TPG can introduce a significant number of waits into the execution results due to the knock-on effect in the precedence relationship.

In this paper, we adopt a variant of TPG, named \emph{Switchable TPG} (STPG) \cite{Berndt2020AFS}. 
STPG allows for the modification of some precedence relationships in a TPG, resulting in new TPGs. 
To address delays, we propose an A*-style algorithm called \emph{Switchable-Edge Search} (SES) to find the new TPG based on a given STPG that minimizes the travel times of all agents to reach their goal locations. 
We prove the optimality of SES and introduce two variants: an execution-based variant (ESES) and a graph-based variant (GSES).
Experimental results show that GSES finds the optimal TPG with an average runtime of less than 1 second for various numbers of agents on small- and medium-sized maps. On larger maps, GSES runs up to 4 times faster than existing replanning algorithms.

\section{Preliminaries}


\begin{definition}[MAPF]\label{definition:MAPF}
Multi-Agent Path Finding (MAPF) aims to find collision-free paths for a team of agents $\mathcal{A}$ on a given graph. Each agent $i \in \mathcal{A}$ has a unique start location and a unique goal location. In each discrete timestep, every agent either moves to an adjacent location or waits at its current location. A path for an agent specifies its action at each timestep from its start to goal locations. A collision occurs if either of the following happens:%
\begin{enumerate}
    \item Two agents are at the same location at the same timestep.

    \item One agent leaves a location at the same timestep when another agent enters the same location.%
\end{enumerate}%
A MAPF solution is a set of collision-free paths of all agents.%
\end{definition}%

\begin{remark}
 The above definition of collision coincides with that in the setting of k-robust MAPF \cite{k-robust} with $k=1$. We disallow the second type of collision because, if agents follow each other and the front agent suddenly stops, the following agents may collide with the front agent. Thus, this restriction ensures better robustness when agents are subject to delays. Note that the swapping collision, where two agents swap their locations simultaneously, is a special case of the second type of collision. 
\end{remark}

A MAPF solution can be represented in different formats. We stick to the following format for our discussion, though our algorithms do not depend on specific formats.

\begin{definition}[MAPF Solution]
\label{assumption:MAPF}
    A MAPF solution takes the form of a set of collision-free paths $\mathcal{P} = \{p_i: i \in \mathcal{A}\}$. Each path $p_i$ is a sequence of location-timestep tuples $(l^i_0, t^i_0) \to (l^i_1, t^i_1) \to \cdots \to (l^i_{zi}, t^i_{zi})$ with the following properties:
(1) The sequence follows a strict temporal ordering: $0 = t^i_0 < t^i_1 < \cdots < t^i_{zi}$.
(2) $l^i_0$ and $l^i_{zi}$ are the start and goal locations of agent $i$, respectively. 
(3) Each tuple $(l^i_k, t^i_k)$ with $k > 0$ specifies a move action of $i$ from $l^i_{k-1}$ to $l^i_k$ at timestep $t^i_k$.

These properties force all consecutive pairs of locations $l^i_{k}$ and $l^i_{k+1}$ to be adjacent on the graph. A wait action is implicitly defined between two consecutive tuples. Namely, if $ t^i_{k+1} - t^i_k= \Delta > 1$, then $i$ is planned to wait at $l^i_k$ for $\Delta-1$ timesteps before moving to $l^i_{k+1}$. Additionally, $t^i_{zi}$ records the time when $i$ reaches its goal location, called the \emph{travel time} of $i$. The cost of a MAPF solution $\mathcal{P}$ is $cost(\mathcal{P})=\sum_{i \in \mathcal{A}} t^i_{zi}$, and $\mathcal{P}$ is \emph{optimal} if its cost is minimum.
\end{definition}

\begin{remark}
    \Cref{assumption:MAPF} discards the explicit representation of wait actions. This is because, when executing $\mathcal{P}$ as a TPG (specified in the next section), it may reduce the travel time if $\mathcal{P}$ has unnecessary wait actions, e.g., when $\mathcal{P}$ is suboptimal.
\end{remark}




\paragraph{Related Works}
Numerous recent studies on MAPF have explored strategies for managing unexpected delays during execution.
A simple strategy is to re-solve the MAPF problem when a delay occurs. 
However, this strategy is computationally intensive, leading to prolonged agent waiting time. 
To avoid the need for replanning, \citet{k-robust} suggested the creation of a $k$-robust MAPF solution, allowing agents to adhere to their planned paths even if each agent is delayed by up to $k$ timesteps. 
However, replanning is still required if an agent's delay exceeds $k$ timesteps.
\citet{atzmon2020probabilistic} then proposed a different model, called $p$-robust MAPF solutions, that ensures execution success with a probability of at least $p$, given an agent delay probability model. 
Nevertheless, planning a $k$-robust or $p$-robust MAPF solution is considerably more computational-intensive than computing a standard MAPF solution.
Another strategy for managing delays involves the use of an execution policy that preserves the precedence relationships of a MAPF solution during execution \cite{honig2016, ma2017multi, honig2019persistent}. 
This strategy is quick and eliminates the need for replanning paths. 
However, the execution results often leave room for improvement, 
as many unnecessary waits are introduced. 
Our work aims to address this limitation by formally exploring the concept of optimizing precedence relationships online~\cite{Berndt2020AFS, MannucciTRO21}.

\section{Temporal Plan Graph (TPG)}\label{sec:TPG}

In essence, we aim to optimize the passing order for multiple agents to visit the same location. This is achieved using a graph-based abstraction known as the TPG.

\begin{definition}[TPG]\label{definition:TPG}
A Temporal Plan Graph (TPG)~\cite{honig2016} is a directed graph $\mathcal{G} = (\mathcal{V}, \mathcal{E}_1, \mathcal{E}_2)$ that represents the precedence relationships of a MAPF solution $\mathcal{P}$. The set of vertices is $\mathcal{V} = \{v^i_k: i\in \mathcal{A}, k \in [0, zi]\}$, where each vertex $v^i_k$ corresponds to $(l_k^i, t_k^i)$, namely the $k^\text{th}$ move action in path $p_i$. There are two types of edges $\mathcal{E}_1$ and $\mathcal{E}_2$, where each directed edge $(u,v) \in \mathcal{E}_1 \cup \mathcal{E}_2$ encodes a precedence relationship between a pair of move actions, namely movement $u$ is planned to happen \emph{before} movement $v$.
\begin{itemize}
    \item A \emph{Type 1 edge} connects two vertices of the same agent, specifying its path. Specifically, $\mathcal{E}_1 = \{(v_{k}^i, v_{k+1}^i): \forall i \in \mathcal{A}, k \in [0, zi) \}$.
    \item A \emph{Type 2 edge} connects two vertices of distinct agents, specifying their ordering of visiting the same location. Specifically, 
        $\mathcal{E}_2 = \{(v_{s+1}^j, v_{k}^i):  \forall i\neq j \in \mathcal{A}, s \in [0, zj), k \in [0, zi] 
        \text{ satisfying } l_{s}^j =l_{k}^i \text{ and } t_{s+1}^j < t_{k}^i\}$. 
\end{itemize}
\end{definition}%
\begin{figure}
    \centering
    \includegraphics[width=\linewidth]{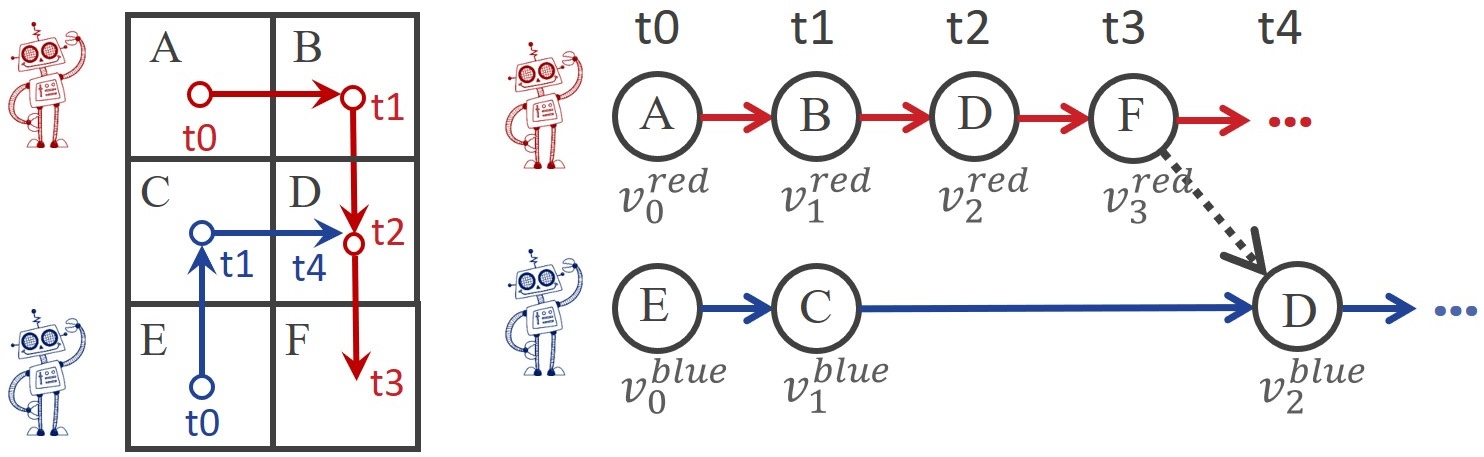}%
    \caption{Example of converting a MAPF solution to a TPG. The solid arrows in the TPG represent Type 1 edges, and the dashed arrow represents a Type 2 edge.}
    \label{figure:example1}
\end{figure}
\begin{example} \Cref{figure:example1} shows an example of converting a MAPF solution into a TPG. Both agents are planned to visit location D, and the red agent is planned to visit D earlier than the blue agent. Consequently, there is a Type 2 edge from $v^{\text{red}}_3$ to $v^{\text{blue}}_2$, signifying that the blue agent can move to D only after the red agent has reached F. Note that we define Type 2 edge as $(v^{\text{red}}_3, v^{\text{blue}}_2)$ instead of $(v^{\text{red}}_2, v^{\text{blue}}_2)$ to avoid the second type of collision in Definition \ref{definition:MAPF}.
\end{example}

\begin{algorithm}[t!]
 \SetAlgorithmName{Procedure}{algorithmautorefname}{list of algorithms name}
 \caption{TPG Execution\\
 Codes highlighted in \textcolor{blue}{blue} compute $cost(\mathcal{G})$ and can be omitted for the mere purpose of execution.}
 \label{procedure:execute}
 
\textcolor{blue}{Define a counter $cost$\;}
\Fn{\textsc{Init$_{\textsc{exec}}$}$(\mathcal{V})$}
{
    \textcolor{blue}{$cost \gets 0$\;}
    Mark vertices in $\mathcal{V}_0 = \{v_0^i : i \in \mathcal{A}\}$ as satisfied\;\label{line:exec:init:mark-satisfied}
    Mark vertices in $(\mathcal{V} \setminus \mathcal{V}_0)$ as unsatisfied\;\label{line:exec:init:mark-unsatisfied}
}
\Fn{\textsc{Step$_{\textsc{exec}}$}$(\mathcal{G}=(\mathcal{V}, \mathcal{E}_1, \mathcal{E}_2))$}
{
    $\mathcal{S} \gets \emptyset$\tcp*{Vertices to be marked as satisfied}
    \ForAll{$i \in \mathcal{A} : v^i_{zi}$ unsatisfied}{
        \textcolor{blue}{$cost \gets cost+1$\;}
        $k \gets \min\{k: v^i_{k} \text{ unsatisfied}\}$\;
        \If{$\forall (u, v_k^i) \in \mathcal{E}_2 : u$ \normalfont{satisfied}}{Add $v^i_{k}$ into $\mathcal{S}$\;\label{line:exec:v-exists}}
    }

    \Return $\mathcal{S}$\;\label{line:exec:step:return}
}
\Fn{\textsc{Exec$(\mathcal{G}=(\mathcal{V}, \mathcal{E}_1, \mathcal{E}_2))$}}
{   
    \textsc{Init$_{\textsc{EXEC}}$}($\mathcal{V}$)\;
    \While{$\exists v \in \mathcal{V} : v$ \normalfont{unsatisfied}\label{line:exec-while}}{
    $S \gets$ \textsc{Step$_{\textsc{exec}}$}$(\mathcal{G})$\;\label{line:exec:call-step}
    \lForAll{$v^i_{k} \in \mathcal{S}$\label{line:exec:for-s}}{Mark $v^i_{k}$ as satisfied\label{line:exec:mark-satisfied}}}
\textcolor{blue}{\Return{$cost$}\;}\label{line:exec:return}
}
\end{algorithm}
\paragraph{Executing a TPG}
Procedure \ref{procedure:execute} describes how to execute a TPG $\mathcal{G}$, which includes a main function $\textsc{exec}$ and two helper functions $\textsc{init}_\textsc{exec}$ and $\textsc{step}_\textsc{exec}$, along with two marks ``satisfied'' and ``unsatisfied'' for vertices. Marking a vertex as satisfied corresponds to moving an agent to the corresponding location, and we do so if and only if all in-neighbors of this vertex have been satisfied. The execution terminates when all vertices are satisfied, i.e., all agents have reached their goal locations. The \emph{cost} of executing $\mathcal{G}$, namely $cost(\mathcal{G}) = \textsc{Exec}(\mathcal{G})$, is the sum of travel time for agents following $\mathcal{G}$ (while assuming no delays happen).

We now introduce some known properties of TPGs. All proofs are delayed to the appendix. We use $\mathcal{G}$ to denote a TPG constructed from a MAPF solution $\mathcal{P}$ as in \Cref{definition:TPG}.


\begin{prop}[Cost]\label{proposition:cost}
    $cost(\mathcal{G}) \le cost(\mathcal{P})$.
\end{prop}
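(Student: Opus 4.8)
The plan is to track, for the execution in Procedure~\ref{procedure:execute}, the exact iteration of the \textsc{while} loop of \textsc{Exec} at which each vertex becomes satisfied, and to compare these execution times against the planned timesteps $t^i_k$ of $\mathcal{P}$. For every vertex $v^i_k$ let $\tau^i_k$ denote the iteration index at which $v^i_k$ is marked satisfied, with $\tau^i_0 = 0$ for the start vertices satisfied in \textsc{Init}$_\textsc{exec}$. First I would note that $\mathcal{G}$ is acyclic: every edge points from a smaller planned timestamp to a strictly larger one (for $\mathcal{E}_1$ since $t^i_k < t^i_{k+1}$, and for $\mathcal{E}_2$ since the defining condition requires $t^j_{s+1} < t^i_k$), so the planned timestamps form a topological order, the execution terminates with all vertices satisfied, and every $\tau^i_k$ is well defined.

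Next I would reduce the cost to these satisfaction times. Since \textsc{Step}$_\textsc{exec}$ satisfies at most the single smallest-index unsatisfied vertex of each agent per iteration, the vertices of agent $i$ are satisfied in increasing index order, and $v^i_k$ is satisfied in the earliest iteration all of whose strictly preceding iterations have already satisfied the Type 1 predecessor $v^i_{k-1}$ and every Type 2 in-neighbor of $v^i_k$. This gives the recurrence
\begin{equation}\label{eq:tau-rec}
\tau^i_k = 1 + \max\Bigl(\tau^i_{k-1},\ \max_{(u,\,v^i_k)\in\mathcal{E}_2}\tau(u)\Bigr),
\end{equation}
where $\tau(u)$ is the satisfaction time of $u$ and the inner maximum is taken to be $-\infty$ when $v^i_k$ has no Type 2 in-neighbor. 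Moreover, the counter \emph{cost} is incremented for agent $i$ in exactly the iterations where $v^i_{zi}$ is still unsatisfied, i.e.\ iterations $1,\dots,\tau^i_{zi}$, so summing over agents yields $cost(\mathcal{G}) = \sum_{i\in\mathcal{A}}\tau^i_{zi}$.

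The core step is to prove $\tau^i_k \le t^i_k$ for all $i,k$, by induction on the integer value $\tau^i_k$ (valid because, by~\eqref{eq:tau-rec}, every predecessor on the right-hand side has a strictly smaller satisfaction time). The base case is $\tau^i_0 = 0 = t^i_0$. For the inductive step, $\mathcal{P}$ satisfies the same precedence inequalities that drive~\eqref{eq:tau-rec}: property~(1) of a MAPF solution gives $t^i_k \ge t^i_{k-1}+1$, and the defining condition of a Type 2 edge $(u,v^i_k)$ gives $t^i_k \ge t(u)+1$, writing $t(u)$ for the planned timestamp of $u$. Applying the induction hypothesis to $v^i_{k-1}$ and to each Type 2 in-neighbor $u$ converts these into $t^i_k \ge \tau^i_{k-1}+1$ and $t^i_k \ge \tau(u)+1$; hence $t^i_k$ dominates the right-hand side of~\eqref{eq:tau-rec}, giving $t^i_k \ge \tau^i_k$. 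Specializing to goal vertices and summing, $cost(\mathcal{G}) = \sum_i \tau^i_{zi} \le \sum_i t^i_{zi} = cost(\mathcal{P})$.

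I expect the main obstacle to be the bookkeeping behind~\eqref{eq:tau-rec} and the cost identity rather than the induction. One must argue carefully that each agent advances by at most one vertex per iteration and in increasing index order, so that ``the smallest unsatisfied index equals $k$'' is exactly the statement ``$v^i_{k-1}$ is already satisfied,'' and that the counter charges each agent precisely $\tau^i_{zi}$ units. Once the recurrence and cost identity are pinned down, the inequality $\tau^i_k \le t^i_k$ expresses the intuitive fact that the execution realizes the earliest schedule consistent with all precedence constraints, whereas $\mathcal{P}$ is merely one such consistent schedule.
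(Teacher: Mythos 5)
Your proof is correct and takes essentially the same approach as the paper's: both reduce the proposition to the key fact that, by induction, every vertex $v^i_k$ is marked satisfied no later than the $(t^i_k)^{\text{th}}$ while-loop iteration of \textsc{Exec}, which follows because every edge of $\mathcal{G}$ connects vertices whose planned timesteps differ by at least one, and then sum over the goal vertices. The paper's Lemma A runs the induction directly over while-loop iterations, which makes your extra scaffolding (the acyclicity/termination argument, the exact recurrence for the satisfaction times, and the explicit cost identity) unnecessary, but the substance of the two arguments is identical.
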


Intuitively, $cost(\mathcal{G}) < cost(\mathcal{P})$ if $\mathcal{P}$ has unnecessary wait actions and $cost(\mathcal{G}) = cost(\mathcal{P})$ otherwise.

\begin{prop}[Collision-Free]\label{proposition:collision}
    Executing a TPG with Procedure~\ref{procedure:execute} does not lead to collisions among agents.
\end{prop}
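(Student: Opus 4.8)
The plan is to track, for each vertex $v \in \mathcal{V}$, the iteration $T(v)$ of the while-loop in \textsc{Exec} (Procedure~\ref{procedure:execute}) at which $v$ is marked satisfied, setting $T(v^i_0)=0$ for the start vertices. The first thing I would prove is a monotonicity lemma: for every edge $(u,v)\in\mathcal{E}_1\cup\mathcal{E}_2$ we have $T(u) < T(v)$. This holds because a vertex is added to $\mathcal{S}$ in \textsc{Step}$_{\textsc{exec}}$ only when all its in-neighbors are already satisfied, and because marking happens at the end of each iteration, so an in-neighbor that is ``already satisfied'' was in fact marked in a strictly earlier iteration. For a Type 2 edge this is exactly the condition tested inside \textsc{Step}$_{\textsc{exec}}$; for a Type 1 edge $(v^i_{k-1},v^i_k)$ it follows because the choice $k=\min\{k:v^i_k\text{ unsatisfied}\}$ forces $v^i_{k-1}$ to have been satisfied in an earlier iteration. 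In particular $0=T(v^i_0) < T(v^i_1) < \cdots < T(v^i_{z_i})$ for each agent $i$.

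Next I would read the execution at the timestep level: iteration $T(v^i_k)$ is the timestep at which agent $i$ arrives at $l^i_k$, so agent $i$ occupies $l^i_k$ exactly during the half-open interval $[T(v^i_k),\,T(v^i_{k+1}))$, with the final interval $[T(v^i_{z_i}),\infty)$ for its goal. By the strict monotonicity these intervals tile the timeline, so at each timestep every agent sits at a well-defined location. Under this reading, a type-1 collision of \Cref{definition:MAPF} means two agents occupy a common location $L$ at a common timestep, while a type-2 collision means the arrival timestep of one agent at $L$ coincides with the departure timestep of another agent from $L$.

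The crux is that any shared location forces a separating Type 2 edge. Suppose $l^i_k = l^j_s = L$ with $i\neq j$. Since $\mathcal{P}$ is collision-free in the sense of \Cref{definition:MAPF} — crucially including the prohibition on one agent leaving as another enters — the two planned occupancies of $L$ can neither overlap nor abut, so one agent leaves strictly before the other arrives; say $t^j_{s+1} < t^i_k$. Together with $l^j_s = l^i_k$ this is precisely the defining condition of $\mathcal{E}_2$, whence $(v^j_{s+1},v^i_k)\in\mathcal{E}_2$ and so $T(v^j_{s+1}) < T(v^i_k)$ by the lemma. For a type-1 collision I would then note that agent $j$ has left $L$ by timestep $T(v^j_{s+1})$ while agent $i$ only enters $L$ at $T(v^i_k) > T(v^j_{s+1})$, so their occupancy intervals of $L$ are disjoint — no common timestep. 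For a type-2 collision the same strict inequality, combined with the per-agent chains $T(v^j_s) < T(v^j_{s+1})$ and $T(v^i_k) < T(v^i_{k+1})$, shows that both the ``$j$ leaves, $i$ enters'' instant ($T(v^j_{s+1})$ vs.\ $T(v^i_k)$) and the mirror-image ``$i$ leaves, $j$ enters'' instant ($T(v^i_{k+1})$ vs.\ $T(v^j_s)$) are strictly separated, so no arrival timestep at $L$ ever equals a departure timestep from $L$.

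I expect the main obstacle to be the bookkeeping that connects the discrete iteration semantics of Procedure~\ref{procedure:execute} to the timestep-level collision definition — in particular, arguing that the \emph{strict} inequality in the definition of $\mathcal{E}_2$ (inherited from the stronger $1$-robust collision model) is exactly what rules out the type-2 ``follow'' collision, and handling the boundary cases in which $L$ is an agent's goal, so that one occupancy interval is unbounded and the mirror-image Type 2 edge is absent. Once the monotonicity lemma and the occupancy-interval formalization are in place, the remaining case analysis is routine.
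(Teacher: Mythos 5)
Your proof is correct, and it rests on the same two facts as the paper's own proof: (i) since $\mathcal{P}$ is collision-free under the strict collision model of \Cref{definition:MAPF}, any two visits by distinct agents to a common location are separated by a Type~2 edge of \Cref{definition:TPG}, and (ii) Procedure~\ref{procedure:execute} never marks a vertex satisfied before all of its in-neighbors, with markings deferred to the end of each iteration. The organization, however, differs genuinely. The paper argues by contradiction and locally: it supposes agents $i$ and $j$ collide at iteration $t$, takes their latest satisfied vertices $v^i_k$ and $v^j_s$ at that moment, notes that sharing a location forces one of two specific Type~2 edges to exist, and observes that either orientation makes a satisfied vertex depend on a vertex that is still unsatisfied, a contradiction. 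You argue directly and globally: the satisfaction times $T(\cdot)$, the monotonicity lemma $T(u)<T(v)$ over all edges, and the occupancy intervals $[T(v^i_k),\,T(v^i_{k+1}))$ reduce collision-freeness to disjointness of intervals. Your route is longer and has to treat boundary cases (goal locations, unbounded final intervals) explicitly, but it buys reusable structure: your $T$ together with its monotonicity is essentially the machinery behind the paper's \Cref{theorem:longestPath}, where the satisfaction time of a vertex $v$ is identified with the longest-path length $|lp(v)|$, so your scaffolding anticipates the graph-based view that powers GSES. The paper's proof is shorter, at the price of a terse case analysis (which Type~2 edge must exist in which collision case) whose exhaustiveness is left for the reader to verify against \Cref{definition:MAPF}.
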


Next, we present two lemmas regarding {\it deadlocks} of executing a TPG, which were used in previous work~\cite{Berndt2020AFS,SuAAAI24} and are helpful for our discussion of switchable TPGs in the next section.

\begin{definition}[Deadlock]\label{definition:deadlock}
    When executing a TPG, a \emph{deadlock} is encountered iff, in an iteration of the while-loop of \textsc{Exec}($\mathcal{G}$), $\mathcal{V}$ contains unsatisfied vertices but $\mathcal{S} = \emptyset$. 
\end{definition}

\begin{lem}[Deadlock $\iff$ Cycle]\label{lemma:cycle}
    Executing a TPG encounters a deadlock iff the TPG contains cycles.
\end{lem}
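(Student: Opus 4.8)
The plan is to prove both directions by first isolating two structural invariants that Procedure~\ref{procedure:execute} maintains, and then relating deadlocks (\Cref{definition:deadlock}) to the (non)existence of a topological ordering of $\mathcal{G}$. The invariants I would establish, by routine induction on the iterations of the while-loop of $\textsc{Exec}$, are: \textbf{(a)} at the moment $\textsc{Step}_{\textsc{exec}}$ adds a vertex to $\mathcal{S}$, all of its in-neighbors in $\mathcal{E}_1 \cup \mathcal{E}_2$ are already satisfied; and \textbf{(b)} for each agent $i$, the satisfied vertices always form a prefix $v^i_0, \dots, v^i_m$ of its path. For (a), the Type 2 in-neighbors are exactly those tested in the \emph{If}-guard, while the unique Type 1 in-neighbor $v^i_{k-1}$ is satisfied because $k$ is chosen as the minimal index with $v^i_k$ unsatisfied. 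Invariant (b) holds because $v^i_0$ is satisfied at initialization and $\textsc{Step}_{\textsc{exec}}$ only ever satisfies the minimal unsatisfied index of each agent. A useful consequence of (a), again by induction, is that the satisfied set is closed under predecessors: if $v$ is satisfied then every ancestor of $v$ in $\mathcal{G}$ is satisfied.

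For the direction \emph{cycle $\Rightarrow$ deadlock}, suppose $\mathcal{G}$ contains a cycle $C$. I would argue that no vertex of $C$ can ever be satisfied: if some vertex of $C$ were the first on $C$ to be satisfied, then by invariant (a) its cycle-predecessor (an in-neighbor) would already be satisfied, contradicting ``first.'' Hence the vertices of $C$ stay unsatisfied throughout the entire execution. Since the satisfied set is monotonically non-decreasing, is contained in the finite set $\mathcal{V}$, and can never contain the vertices of $C$, it cannot grow at every step forever; therefore some iteration returns $\mathcal{S} = \emptyset$ while unsatisfied vertices (those of $C$) remain, which is exactly a deadlock.

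For the converse I would prove the contrapositive, \emph{no cycle $\Rightarrow$ no deadlock}, so assume $\mathcal{G}$ is a DAG and suppose for contradiction that some iteration has unsatisfied vertices $U \neq \emptyset$ but $\mathcal{S} = \emptyset$. The subgraph induced on $U$ is also acyclic, so it has a source $v = v^i_k$, i.e.\ a vertex with no in-neighbor inside $U$. Since $v$ is unsatisfied and $v^i_0$ is always satisfied, $k \ge 1$; its Type 1 in-neighbor $v^i_{k-1}$ lies outside $U$, hence is satisfied, so by invariant (b) index $k$ is the minimal unsatisfied index of agent $i$, and moreover $v^i_{zi}$ is unsatisfied, so agent $i$ is processed by $\textsc{Step}_{\textsc{exec}}$. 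All Type 2 in-neighbors of $v$ are outside $U$ and thus satisfied, so the \emph{If}-guard passes and $v$ is added to $\mathcal{S}$, contradicting $\mathcal{S} = \emptyset$. Therefore a DAG admits no deadlock.

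I expect the main obstacle to be the bookkeeping step that reconciles the procedure's per-agent ``minimal unsatisfied index, check only $\mathcal{E}_2$'' mechanism with the clean graph-theoretic condition ``all in-neighbors satisfied.'' Concretely, the delicate point in the converse is showing that a topological source of the unsatisfied set $U$ is \emph{precisely} a vertex that $\textsc{Step}_{\textsc{exec}}$ is able to add to $\mathcal{S}$; this hinges on the prefix invariant (b) to convert ``Type 1 predecessor is satisfied'' into ``$k$ is the minimal unsatisfied index,'' which is the crux that makes the algorithm-level and graph-level notions coincide.
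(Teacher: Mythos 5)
Your proof is correct and takes essentially the same route as the paper's: the forward direction argues that no vertex on a cycle can ever be satisfied, and the converse takes a topological source of the unsatisfied set and shows $\textsc{Step}_{\textsc{exec}}$ must add it to $\mathcal{S}$, contradicting the deadlock. Your invariants (a) and (b) simply make explicit the bookkeeping (minimal-unsatisfied-index and prefix structure) that the paper's one-line appeal to ``all of its in-neighbors are satisfied'' leaves implicit.
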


\begin{lem}[Deadlock-Free]\label{corollary:deadlock-free}    
    If a TPG is constructed from a MAPF solution, then executing it is deadlock-free.
\end{lem}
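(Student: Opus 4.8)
The plan is to reduce the claim to a purely structural property of $\mathcal{G}$ using the preceding \Cref{lemma:cycle}: since executing a TPG encounters a deadlock if and only if the TPG contains a cycle, it suffices to show that a TPG constructed from a MAPF solution $\mathcal{P}$ is acyclic. So the whole proof collapses to verifying that $\mathcal{G} = (\mathcal{V}, \mathcal{E}_1, \mathcal{E}_2)$ is a DAG, and then invoking \Cref{lemma:cycle}.

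To prove acyclicity, the idea is to exhibit a strictly increasing potential along every edge. I would assign to each vertex $v^i_k$ the timestamp $t^i_k$ coming from the underlying path $p_i$ in \Cref{assumption:MAPF}, and then show that every directed edge $(u,v) \in \mathcal{E}_1 \cup \mathcal{E}_2$ goes from a vertex of smaller timestamp to one of strictly larger timestamp. For a Type 1 edge $(v^i_k, v^i_{k+1})$, this is immediate from the strict temporal ordering $t^i_0 < t^i_1 < \cdots < t^i_{zi}$ guaranteed by \Cref{assumption:MAPF}, so $t^i_k < t^i_{k+1}$. For a Type 2 edge $(v^j_{s+1}, v^i_k)$, the defining condition in \Cref{definition:TPG} explicitly requires $t^j_{s+1} < t^i_k$, which is exactly the monotonicity we want. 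Hence in both cases the head has a strictly greater timestamp than the tail.

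With this monotonicity in hand, the conclusion is routine: along any directed path in $\mathcal{G}$ the associated timestamps strictly increase, so no directed path can return to its starting vertex, and therefore $\mathcal{G}$ contains no directed cycle. Combining this with \Cref{lemma:cycle} yields that executing $\mathcal{G}$ is deadlock-free. I do not anticipate a genuine obstacle here; the only point requiring care is the index bookkeeping for Type 2 edges (the $s+1$ versus $s$ shift), but since the edge is defined precisely so that its endpoints satisfy $t^j_{s+1} < t^i_k$, the strict inequality is built into the construction and the potential argument goes through directly.
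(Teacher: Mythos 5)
Your proof is correct, but it takes a genuinely different route from the paper's. The paper argues by contradiction through the cost bound: if executing $\mathcal{G}$ deadlocked, the while-loop of \textsc{Exec} would run forever with the $cost$ counter strictly increasing, forcing $cost(\mathcal{G}) = \infty$, which contradicts $cost(\mathcal{G}) \le cost(\mathcal{P}) < \infty$ (Proposition~\ref{proposition:cost}). You instead reduce deadlock-freeness to acyclicity via Lemma~\ref{lemma:cycle} and prove acyclicity directly with a timestamp potential: every Type 1 edge increases $t^i_k$ by the strict temporal ordering in \Cref{assumption:MAPF}, and every Type 2 edge $(v^j_{s+1}, v^i_k)$ increases it because $t^j_{s+1} < t^i_k$ is built into \Cref{definition:TPG}, so no directed cycle can exist. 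Your argument is more self-contained and elementary — it needs only the edge definitions and Lemma~\ref{lemma:cycle}, whereas the paper's proof leans on Proposition~\ref{proposition:cost}, which itself requires an inductive argument (Lemma~A in the appendix). Your route also yields the stronger structural fact that $\mathcal{G}$ is a DAG, which is exactly what the paper later needs in the proof of Theorem~\ref{theorem:existence} (there it has to recover acyclicity of $\mathcal{G}_0$ from deadlock-freeness by applying Lemma~\ref{lemma:cycle} in reverse); what the paper's approach buys is brevity, since Proposition~\ref{proposition:cost} is already established and the contradiction is immediate.
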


\section{Switchable TPG (STPG)} 
TPG is a handy representation for precedence relationships. Yet, a TPG constructed as in Definition \ref{definition:TPG} is fixed and bound to a given set of paths. In contrast, our optimization algorithm will use the following extended notion of TPGs, which enables flexible modifications of precedence relationships.

\begin{definition}[STPG]\label{definition:switchableTPG}
Given a TPG $\mathcal{G} = (\mathcal{V}, \mathcal{E}_1, \mathcal{E}_2)$, a \emph{Switchable TPG} (STPG) $\mathcal{G^S} = (\mathcal{V}, \mathcal{E}_1, (\mathcal{S}_{\mathcal{E}2}, \mathcal{N}_{\mathcal{E}2}))$ partitions Type 2 edges $\mathcal{E}_2$ into two disjoint subsets $\mathcal{S}_{\mathcal{E}2}$ (switchable edges) and $\mathcal{N}_{\mathcal{E}2}$ (non-switchable edges) and allows two operations on any switchable edge $(v_{s+1}^j, v_{k}^i) \in \mathcal{S}_{\mathcal{E}2}$:%
\begin{itemize}
    \item $fix(v_{s+1}^j, v_{k}^i)$ removes $(v_{s+1}^j, v_{k}^i)$ from $\mathcal{S}_{\mathcal{E}2}$ and add it into $\mathcal{N}_{\mathcal{E}2}$. It fixes a switchable edge to be non-switchable.
    \item $reverse(v_{s+1}^j, v_{k}^i)$ removes $(v_{s+1}^j, v_{k}^i)$ from $\mathcal{S}_{\mathcal{E}2}$ and add $(v_{k+1}^i, v_{s}^j)$ into $\mathcal{N}_{\mathcal{E}2}$. It switches the precedence relationship and then fixes it to be non-switchable.
\end{itemize}
\end{definition}

\begin{remark}\label{remark:switch}
    Reversing the precedence relationship represented by $(v_{s+1}^j, v_{k}^i)$ produces  $(v_{k+1}^i, v_{s}^j)$ because, based on \Cref{definition:TPG}, Type 2 edge $(v_{s+1}^j, v_{k}^i)$ indicates locations $l^j_s$ and $l^i_k$ are the same. Thus, after reversing, vertex $v^i_{k+1}$ 
    needs to be satisfied before $v_{s}^j$ can be marked as satisfied.
\end{remark}

\begin{example}
\Cref{figure:example2} shows an example of reversing an edge. After the $reverse$ operation, edge $(v^{\text{red}}_1, v^{\text{blue}}_2)$ in the left TPG is replaced with edge $(v^{\text{blue}}_3, v^{\text{red}}_0)$ in the right TPG. 
\end{example}


\begin{figure}
    \centering
    \includegraphics[width=\linewidth]{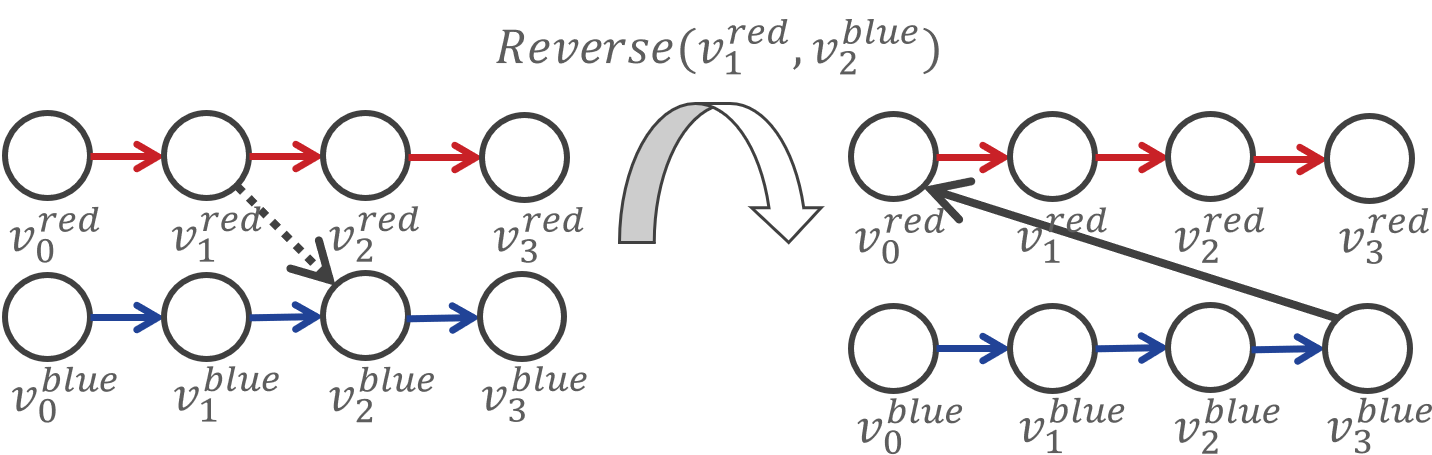}
    \caption{Example of reversing an edge in a TPG.}
    \label{figure:example2}
\end{figure}

\Cref{definition:switchableTPG} defines a strict superclass of \Cref{definition:TPG}. 
A STPG degenerates into a TPG if $\mathcal{S}_{\mathcal{E}2}$ is empty. 

\begin{definition}[$\mathcal{G^S}$-producible TPG]\label{definition:produceTPG} 
Given a STPG $\mathcal{G^S}$, a TPG is $\mathcal{G^S}$-producible if it can be generated through a sequence of $fix$ and $reverse$ operations on $\mathcal{G^S}$.
\end{definition}


We now show the roadmap of our algorithm.
Given a MAPF solution $\mathcal{P}$, we construct TPG $\mathcal{G}_0$ from $\mathcal{P}$ as in \Cref{definition:TPG} and then run Procedure \ref{procedure:execute}. When a delay happens, we (1) construct a STPG $\mathcal{G^S}$ based on $\mathcal{G}_0$ and 
(2) finds a TPG $\mathcal{G}^*$ with $cost(\mathcal{G}^*)= \min\{cost(\mathcal{G}) : \mathcal{G}$ is $\mathcal{G^S}$-producible$\}$, representing an optimal ordering of agents visiting each location, upon sticking to the original location-wise paths. We describe Step (1) below and Step (2) in the next section. 

\begin{construction}\label{construction}
     Assume that, during the execution of $\mathcal{G}_0=(\mathcal{V}, \mathcal{E}_1, \mathcal{E}_2)$, agent $d$ is forced to delay at its current location $l_{c}^d$ for $\Delta$ timesteps. 
     We construct STPG $\mathcal{G^S}$ as follows:
    \begin{enumerate}
        \item Construct STPG $\mathcal{G^S}=(\mathcal{V}, \mathcal{E}_1, (\mathcal{S}_{\mathcal{E}2}, \mathcal{N}_{\mathcal{E}2}))$ with $\mathcal{S}_{\mathcal{E}2} = \{(v_{s+1}^j, v_k^i) \in \mathcal{E}_2 : $ $v_{s+1}^j$ is unsatisfied and $k < zi\}$ and $\mathcal{N}_{\mathcal{E}2} = \{(v_{s+1}^j, v_k^i) \in \mathcal{E}_2 : $ $v_{s+1}^j$ is satisfied or $k=zi\}$.

        \item Create $\Delta$ new dummy vertices $\mathcal{V}_{\text{new}} = \{v_1, \cdots, v_\Delta \}$ and $(\Delta+1)$ new Type-1 edges $\mathcal{E}_{\text{new}} =\{ (v^d_{c}, v_1),$ $(v_1, v_2), \cdots, (v_{\Delta-1}, v_{\Delta}), (v_{\Delta}, v^d_{c+1})\}$ and modify $\mathcal{G^S}$ with $\mathcal{V} \gets \mathcal{V} \cup \mathcal{V}_{\text{new}}$  and 
        $\mathcal{E}_1 \gets (\mathcal{E}_1 \cup \mathcal{E}_{\text{new}}) \setminus \{(v^d_{c}, v^c_{c+1})\}$.
    \end{enumerate}
    If there are multiple agents delayed at the same timestep, we repeat Step 2 for each delayed agent.
\end{construction}

\begin{remark}
    In Step 1, $(v_{s+1}^j, v_k^i)$ is non-switchable when $v_{s+1}^j$ is satisfied because agent $j$ has already visited $l_{s}^j$.  $(v_{s+1}^j, v_{zi}^i)$ is non-switchable because agent $i$ must be the last one to visit its goal location. 
    The dummy vertices added in Step 2 are used to account for the delays in Procedure~\ref{procedure:execute}.
\end{remark}

We now show an intuitive yet crucial theorem.

\begin{thm}\label{theorem:existence}
    If STPG $\mathcal{G^S}$ is constructed by Construction $\ref{construction}$, then there is at least one deadlock-free $\mathcal{G^S}$-producible TPG.
\end{thm}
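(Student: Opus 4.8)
The plan is to exhibit one concrete $\mathcal{G^S}$-producible TPG and prove it is acyclic, then invoke \Cref{lemma:cycle} to conclude deadlock-freeness. The natural candidate is the TPG $\mathcal{G}'$ obtained by applying $fix$ to \emph{every} switchable edge of $\mathcal{G^S}$ and applying no $reverse$. By \Cref{definition:produceTPG} this is $\mathcal{G^S}$-producible, and since $fix$ leaves an edge's orientation untouched while merely relocating it from $\mathcal{S}_{\mathcal{E}2}$ to $\mathcal{N}_{\mathcal{E}2}$, after fixing all of them the non-switchable set is exactly the original $\mathcal{S}_{\mathcal{E}2}\cup\mathcal{N}_{\mathcal{E}2}=\mathcal{E}_2$. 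Hence $\mathcal{G}'$ is precisely $\mathcal{G}_0$ together with the dummy vertices and edges added in Step 2 of \Cref{construction}; that is, $\mathcal{G}'$ is $\mathcal{G}_0$ with each delayed agent's edge $(v^d_c,v^d_{c+1})$ subdivided by a simple chain $v^d_c\to v_1\to\cdots\to v_\Delta\to v^d_{c+1}$.

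Next I would recall that $\mathcal{G}_0$ is acyclic. Since $\mathcal{G}_0$ is constructed from a MAPF solution, \Cref{corollary:deadlock-free} says executing it is deadlock-free, and \Cref{lemma:cycle} then gives that $\mathcal{G}_0$ contains no cycle. The remaining work is to show that the subdivision performed in Step 2 cannot create a cycle. This is the heart of the argument, though it is structurally simple: each dummy vertex $v_m$ has in-degree and out-degree exactly one (its only incident edges are the two consecutive Type 1 edges of the inserted chain) and participates in no Type 2 edge.

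For the subdivision argument I would reason by contradiction. Suppose $\mathcal{G}'$ contained a directed cycle $C$. If $C$ avoids all dummy vertices, then $C$ uses only vertices and edges of $\mathcal{G}_0$ (the sole removed edge $(v^d_c,v^d_{c+1})$ is absent from $\mathcal{G}'$, and every other edge of $C$ lies in $\mathcal{E}_1\cup\mathcal{E}_2$), contradicting the acyclicity of $\mathcal{G}_0$. Otherwise $C$ passes through some dummy vertex $v_m$; because $v_m$ has a unique in-edge and a unique out-edge within its chain, $C$ must traverse the whole chain from $v^d_c$ to $v^d_{c+1}$. Contracting each such chain back to the single edge $(v^d_c,v^d_{c+1})$ then turns $C$ into a directed cycle of $\mathcal{G}_0$, again a contradiction. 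The same contraction handles the case of several delayed agents, since the inserted chains are vertex-disjoint and each subdivides a distinct Type 1 edge. Therefore $\mathcal{G}'$ is acyclic.

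Finally, applying \Cref{lemma:cycle} in the reverse direction, the acyclic TPG $\mathcal{G}'$ is deadlock-free, and since it is $\mathcal{G^S}$-producible this establishes the existence claim. The only points needing care are verifying that ``fix-all'' genuinely reproduces $\mathcal{G}_0$'s Type 2 edges (so no cycle hides among reoriented edges) and that the dummy vertices are truly isolated from Type 2 edges; both follow directly from \Cref{construction} and \Cref{definition:switchableTPG}, so I expect no serious obstacle beyond this bookkeeping.
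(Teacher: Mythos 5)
Your proposal is correct and follows essentially the same route as the paper: both exhibit the na\"ive TPG obtained by $fix$ing every switchable edge, observe that it is $\mathcal{G}_0$ with some Type 1 edges subdivided by dummy chains, and conclude acyclicity (hence deadlock-freeness via \Cref{lemma:cycle} and \Cref{corollary:deadlock-free}). The only difference is that you spell out the chain-contraction argument for why subdivision cannot create a cycle, a step the paper asserts without proof.
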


\begin{proof}
     We generate a na\"ive solution $\mathcal{G}_{\text{naive}}$ by $fix$ing all switchable edges in $\mathcal{G^S}$. Lemma \ref{corollary:deadlock-free} ensures that $\mathcal{G}_0$ is deadlock-free. $\mathcal{G^S}$ constructed in Step 1 is identical to $\mathcal{G}_0$ if we $fix$ all switchable edges. Step 2 behaves as expanding a pre-existing edge $(v^i_{k-1}, v^i_k)$ into a line of connecting edges, which does not create any new cycles. Therefore, by Lemma \ref{lemma:cycle}, $\mathcal{G}_{\text{naive}}$ is deadlock-free.
\end{proof}

\section{Switchable Edge Search (SES) Framework}

\begin{algorithm}[t!]
 \SetAlgorithmName{Algorithm}{algorithmautorefname}{list of algorithms name}
 \caption{Switchable Edge Search (SES) \\ \textsc{Heuristic} and \textsc{Branch} are modules to be specified later. $\mathcal{X}$ stores auxiliary information accompanying a STPG and will be specified later. }
 \label{algorithm:framework}

\KwIn{STPG $\mathcal{G}^\mathcal{S}_{\text{root}}$}%
\KwOut{TPG $\mathcal{G}$}


$(h_{\text{root}}, \mathcal{X}_{init}) \gets \textsc{heuristic}(\mathcal{G}^\mathcal{S}_{\text{root}}, \mathcal{X}_{init})$\;\label{line:SES:root-h}

$\mathcal{Q}\gets \{(\mathcal{G}^\mathcal{S}_{\text{root}}, \mathcal{X}_{init}, 0, h_{\text{root}})\}$\tcp*{A priority queue}

\While{$\mathcal{Q}$ \normalfont{is not empty}\label{line:SES:while}}{
$(\mathcal{G}^\mathcal{S}, \mathcal{X}, g, h) \gets \mathcal{Q}.pop()$\;\label{line:SES:pop}


$(\mathcal{X}', g_\Delta, (v_{k+1}^i, v_{s}^j)) \gets \textsc{Branch}(\mathcal{G}^\mathcal{S}, \mathcal{X})$\;\label{line:SES:call-branch}

\lIf{$(v_{k+1}^i, v_{s}^j) = \textsc{Null}$}{\Return $\mathcal{G}^\mathcal{S}$}


$\mathcal{G}^\mathcal{S}_{\text{f}} \gets fix(\mathcal{G}^\mathcal{S}, (v_{k+1}^i, v_{s}^j))$\;\label{line:SES:fix}

\If{\normalfont{\textbf{not} \textsc{CycleDetection}($\mathcal{G}^\mathcal{S}_{\text{f}}$, $(v_{k+1}^i, v_{s}^j)$)}\label{line:SES:detect-cycle1}}{
    $(h_{f}, \mathcal{X}_{f}) \gets \textsc{Heuristic}(\mathcal{G}^\mathcal{S}_{\text{f}}, \mathcal{X}')$\;
    $\mathcal{Q}.push((\mathcal{G}^\mathcal{S}_{\text{f}},  \mathcal{X}_{f}, g+g_\Delta, h_{f}))$\;\label{line:SES:insert1}
}

$\mathcal{G}^\mathcal{S}_{\text{r}} \gets reverse(\mathcal{G}^\mathcal{S}, (v_{k+1}^i, v_{s}^j))$\;\label{line:SES:reverse}

\If{\normalfont{\textbf{not} \textsc{CycleDetection}($\mathcal{G}^\mathcal{S}_{\text{r}}$, $(v_{s+1}^j, v_{k}^i)$)}\label{line:SES:detect-cycle2}}{
$(h_{r}, \mathcal{X}_{r}) \gets \textsc{Heuristic}(\mathcal{G}^\mathcal{S}_{\text{r}},\mathcal{X}')$\;
$\mathcal{Q}.push((\mathcal{G}^\mathcal{S}_{\text{r}}, \mathcal{X}_{r}, g+g_\Delta, h_{r}))$\;\label{line:SES:insert2}}

}

    \textbf{throw exception} ``No solution found''\;\label{line:exception}


    \Fn{\textsc{CycleDetection}$(\mathcal{G}^\mathcal{S}, (u, v) )$}
    {
        Run depth-first search (DFS) from $v$ on $red(\mathcal{G}^\mathcal{S})$\; 
        \lIf{\normalfont{DFS visits vertex $u$}}{\Return true}
        \Return false\;
    }
    
\end{algorithm}

We describe our algorithm, \emph{Switchable Edge Search} (SES), in a top-down modular manner, starting with a high-level heuristic search framework in Algorithm \ref{algorithm:framework}. 
We define the \emph{partial cost} of a STPG as the cost of its \emph{reduced} TPG, which is defined as follows.

\begin{definition}[Reduced TPG]\label{definition:reduceTPG}
The reduced TPG of a STPG $\mathcal{G^S} = (\mathcal{V}, \mathcal{E}_1, (\mathcal{S}_{\mathcal{E}2}, \mathcal{N}_{\mathcal{E}2}))$ is the TPG that omits all switchable edges, denoted as $red(\mathcal{G^S}) = (\mathcal{V}, \mathcal{E}_1, \mathcal{N}_{\mathcal{E}2})$. 
\end{definition}

\begin{lem}\label{lemma:partial}
    The partial cost of a STPG $\mathcal{G^S}$ is no greater than the cost of any $\mathcal{G^S}$-producible TPG.
\end{lem}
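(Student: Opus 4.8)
The plan is to reduce the statement to a \textbf{monotonicity principle} for TPG execution: adding Type 2 edges (precedence constraints) can only increase the execution cost. First I would establish the relevant edge-set containment. By \Cref{definition:produceTPG}, any $\mathcal{G^S}$-producible TPG $\mathcal{G}$ is obtained by applying $fix$ or $reverse$ to each switchable edge; a $fix$ retains the original Type 2 edge and a $reverse$ replaces it with its single reversed counterpart, while neither operation alters $\mathcal{V}$ or $\mathcal{E}_1$. Hence $\mathcal{G}$ and $red(\mathcal{G^S})$ share the same vertices and the same Type 1 edges, and the Type 2 edges of $red(\mathcal{G^S})$, namely $\mathcal{N}_{\mathcal{E}2}$, form a subset of the Type 2 edges of $\mathcal{G}$. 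The claim then follows from the general fact that for two TPGs on the same vertex set with identical $\mathcal{E}_1$ and $\mathcal{E}_2^{(1)} \subseteq \mathcal{E}_2^{(2)}$, one has $cost(\mathcal{G}_1) \le cost(\mathcal{G}_2)$.

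To prove this monotonicity fact, I would track the \emph{satisfied} sets produced by Procedure~\ref{procedure:execute}. Writing $\mathrm{Sat}_1(t)$ and $\mathrm{Sat}_2(t)$ for the sets of vertices marked satisfied at the end of the $t$-th while-loop iteration when executing $\mathcal{G}_1$ and $\mathcal{G}_2$ respectively, I would show by induction on $t$ that $\mathrm{Sat}_2(t) \subseteq \mathrm{Sat}_1(t)$. The base case $t=0$ holds since both executions initialize the satisfied set to $\mathcal{V}_0$. For the inductive step, take any vertex $v_k^i$ newly satisfied in $\mathcal{G}_2$ at iteration $t$. Being the first unsatisfied vertex on path $i$ forces $v_{k-1}^i \in \mathrm{Sat}_2(t-1)$, and the satisfaction test forces all Type 2 predecessors of $v_k^i$ in $\mathcal{G}_2$ to lie in $\mathrm{Sat}_2(t-1)$. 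By the inductive hypothesis these predecessors all lie in $\mathrm{Sat}_1(t-1)$; since the Type 2 predecessors of $v_k^i$ in $\mathcal{G}_1$ are a subset of those in $\mathcal{G}_2$, every predecessor required in $\mathcal{G}_1$ is already satisfied by iteration $t-1$. Thus in the execution of $\mathcal{G}_1$ the vertex $v_k^i$ is either already satisfied or is the first unsatisfied vertex on path $i$ with all Type 2 predecessors satisfied, so it enters $\mathrm{Sat}_1(t)$ in either case.

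Next I would convert the satisfied-set containment into the cost inequality. Reading off the blue counter in Procedure~\ref{procedure:execute}, each agent contributes $1$ per iteration until its goal vertex is marked satisfied, so $cost(\mathcal{G}) = \sum_{i \in \mathcal{A}} \tau(v_{zi}^i)$, where $\tau(v)$ is the iteration at which $v$ is first satisfied. The containment $\mathrm{Sat}_2(t) \subseteq \mathrm{Sat}_1(t)$ gives $\tau_1(v_{zi}^i) \le \tau_2(v_{zi}^i)$ for every agent $i$, and summing over $\mathcal{A}$ yields $cost(\mathcal{G}_1) \le cost(\mathcal{G}_2)$. For well-definedness I would note that if $\mathcal{G}$ is deadlock-free then $red(\mathcal{G^S})$, being a subgraph on the same vertices, is also acyclic, so by \Cref{lemma:cycle} its execution terminates with finite cost; if instead $\mathcal{G}$ deadlocks, its cost is infinite and the inequality is trivial. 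Applying the monotonicity fact with $\mathcal{G}_1 = red(\mathcal{G^S})$ and $\mathcal{G}_2 = \mathcal{G}$ an arbitrary $\mathcal{G^S}$-producible TPG completes the argument.

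The main obstacle I anticipate is the inductive step, specifically arguing cleanly that Procedure~\ref{procedure:execute} marks at most one vertex per agent per iteration and processes each path in increasing index order, so that the ``first unsatisfied vertex'' bookkeeping lines up across the two executions and the Type 1 predecessor is handled automatically rather than through an explicit edge check. The edge-containment bookkeeping for $reverse$ — confirming it contributes exactly one Type 2 edge and leaves $\mathcal{N}_{\mathcal{E}2}$ and $\mathcal{E}_1$ otherwise untouched — is routine but should be stated precisely so the subset relation is unambiguous.
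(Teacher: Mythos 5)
Your proposal is correct and follows essentially the same route as the paper's proof: both compare executions of $\mathcal{G}$ and $red(\mathcal{G^S})$, use the fact that every edge of $red(\mathcal{G^S})$ appears in $\mathcal{G}$, and induct over the while-loop iterations to show that any vertex satisfiable in $\mathcal{G}$ is satisfiable in $red(\mathcal{G^S})$, hence the cost inequality. Your write-up simply makes explicit the bookkeeping (satisfied-set containment, per-agent cost accounting, and the deadlock/infinite-cost case) that the paper's shorter argument leaves implicit.
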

\begin{proof}
    Let $\mathcal{G}$ be a $\mathcal{G}^\mathcal{S}$-producible TPG. Consider running Procedure~\ref{procedure:execute} on $\mathcal{G}$ and $red(\mathcal{G^S})$, respectively. Since an edge appears in $red(\mathcal{G^S})$ must appear in $\mathcal{G}$, we can inductively show that, in any call to $\textsc{step}_\textsc{exec}$, if a vertex $v$ can be marked as satisfied in $\mathcal{G}$, then it can be marked as satisfied in $red(\mathcal{G^S})$. Thus, the total timesteps to satisfy all vertices in $red(\mathcal{G^S})$ cannot exceed that in $\mathcal{G}$.
\end{proof}

As shown in \Cref{algorithm:framework}, SES runs A* in the space of STPGs with a root node corresponding to the STPG $\mathcal{G}^\mathcal{S}_{\text{root}}$ constructed as in Construction \ref{construction}. The priority queue $\mathcal{Q}$ sorts its nodes by their $f$-values (namely $g+h$). The $f$-value of a node is defined as the partial cost of its STPG. When expanding a node, SES selects one switchable edge in the STPG by module $\textsc{Branch}$ and generates two child nodes with the selected edge being $fix$ed or $reverse$d. We abuse the operators $fix$ and $reverse$ on \Cref{line:SES:fix,line:SES:reverse} to take a STPG and a switchable edge as input and return a new STPG. 

SES uses function \textsc{CycleDetection} to prune child nodes with STPGs that definitely produce cyclic TPGs, namely STPGs whose reduced TPGs are cyclic. Specifically, \textsc{CycleDetection}($\mathcal{G}^\mathcal{S}$, $(u, v))$ returns true iff $red(\mathcal{G}^\mathcal{S})$ contains a cycle involving edge $(u,v)$. As $\mathcal{G}^\mathcal{S}_\text{root}$ is acyclic, it holds inductively that \textsc{CycleDetection}($\mathcal{G}$, $(u, v))$ returns true iff $red(\mathcal{G}^\mathcal{S})$ contains \emph{any} cycle. This is because, when we generate a node, we add only one new non-switchable edge, so any cycle formed must contain the new edge.



\begin{assumption}\label{assumption}
    The modules in SES satisfy:

\begin{enumerate}[label={A\arabic*}]
    \item \textsc{Branch}$(\mathcal{G}^\mathcal{S}, \mathcal{X})$ outputs an updated auxiliary information $\mathcal{X}'$, a value $g_\Delta$, and a switcable edge of $\mathcal{G}^\mathcal{S}$ if exists or $\textsc{Null}$ otherwise.\label{assumption:branch}
    \item \textsc{Heuristic}$(\mathcal{G}^\mathcal{S}, \mathcal{X})$ computes a value $h$ such that $g+ h$ is the partial cost of $\mathcal{G}^\mathcal{S}$ for every node $(\mathcal{G}^\mathcal{S}, \mathcal{X}, g , h) \in \mathcal{Q}$.\label{assumption:heuristic}
\end{enumerate}
\end{assumption}


\begin{thm}[Completeness and Optimality]
Under Assumption~\ref{assumption}, 
SES always finds a deadlock-free TPG $\mathcal{G}$ with $cost(\mathcal{G})=\min\{cost(\mathcal{G}) : \mathcal{G}$ is $\mathcal{G}^\mathcal{S}_{\text{root}}$-producible$\}$.
\end{thm}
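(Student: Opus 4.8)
The plan is to treat this as a textbook A$^*$ completeness-and-optimality argument, specialized to the finite tree of STPGs that SES explores, with \Cref{lemma:partial} serving as an admissible heuristic and \Cref{lemma:cycle} linking acyclicity to deadlock-freeness. First I would pin down the structure of the search tree. By \Cref{definition:switchableTPG}, both $fix$ and $reverse$ strictly decrease $|\mathcal{S}_{\mathcal{E}2}|$ by one, so every root-to-leaf path has length exactly $m$, the number of switchable edges in the root; the tree is therefore finite, and a popped node triggers a \textbf{return} precisely when no switchable edge remains (by A1 in \Cref{assumption}), i.e. when its STPG is already a TPG. By following, at each node, the child dictated by a fixed target assignment of $fix$/$reverse$ to the $m$ edges, I would argue that the leaves are in bijection with the $\mathcal{G}^\mathcal{S}_{\text{root}}$-producible TPGs (\Cref{definition:produceTPG}). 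Since a cyclic TPG deadlocks (\Cref{lemma:cycle}) and hence never terminates \textsc{Exec}, its cost is infinite, so $\min\{cost(\mathcal{G}):\mathcal{G}\text{ is }\mathcal{G}^\mathcal{S}_{\text{root}}\text{-producible}\}$ is attained by a deadlock-free TPG, and one exists by \Cref{theorem:existence}.

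The next ingredient is a monotonicity observation: along any root-to-descendant path $\mathcal{N}_{\mathcal{E}2}$ only grows (each $fix$ adds the selected edge, each $reverse$ adds its reversal), so the edge sets of the reduced TPGs $red(\cdot)$ are nested and increasing. Two consequences follow. First, soundness of pruning: if a node's $red(\mathcal{G}^\mathcal{S})$ contains a cycle, every descendant, and in particular every descendant leaf, contains that same cycle and is thus cyclic, so \textsc{CycleDetection} never discards a deadlock-free leaf. Second, every node actually pushed onto $\mathcal{Q}$ has an acyclic reduced TPG (the root because its reduced TPG is a subgraph of the acyclic na\"ive TPG of \Cref{theorem:existence}, the children because they pass \textsc{CycleDetection}); consequently a returned TPG $\mathcal{G}$ equals its own reduced TPG, is acyclic, and is therefore deadlock-free by \Cref{lemma:cycle}. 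Finally, by A2 in \Cref{assumption} the $f$-value of a node equals the partial cost of its STPG, which by \Cref{lemma:partial} lower-bounds the cost of every TPG producible from it, i.e. $f$ is an admissible lower bound on the cost of every leaf in that node's subtree.

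With these pieces in place, completeness and optimality follow the standard template. For completeness I would fix an optimal deadlock-free producible TPG $\mathcal{G}^*$; it is a surviving leaf, and by the nesting property all of its ancestors are acyclic and hence never pruned, so the frontier always contains some node on the root-to-$\mathcal{G}^*$ path until that leaf is popped. Thus $\mathcal{Q}$ never empties before a return, and the exception on \Cref{line:exception} is unreachable. For optimality, let $C^* = cost(\mathcal{G}^*)$ and suppose SES returns the TPG $\mathcal{G}$ at the popped node $n$. Since $\mathcal{G}$ is a deadlock-free producible TPG, $cost(\mathcal{G}) \ge C^*$. For the reverse inequality I would invoke the frontier invariant: at the moment $n$ is popped, some ancestor $n'$ of $\mathcal{G}^*$ lies in $\mathcal{Q}$, and because $\mathcal{G}^*$ is producible from $n'$, admissibility gives $f(n') \le C^*$; as the queue pops the minimum, $cost(\mathcal{G}) = f(n) \le f(n') \le C^*$. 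Combining the two inequalities yields $cost(\mathcal{G}) = C^*$.

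I expect the main obstacle to be the bookkeeping that makes the frontier argument rigorous on this particular tree: showing that an unpopped ancestor of $\mathcal{G}^*$ is always present in $\mathcal{Q}$ (which rests on pruning never touching that path, hence on the edge-nesting lemma) and that its $f$-value is a valid lower bound on $C^*$ (which rests on $\mathcal{G}^*$ being producible from that ancestor's STPG together with \Cref{lemma:partial}). A secondary subtlety worth stating explicitly is reconciling the theorem's minimum, taken over \emph{all} producible TPGs, with the fact that SES only ever returns acyclic ones; this is resolved by the infinite-cost observation above, which guarantees the unrestricted minimum coincides with the minimum over deadlock-free TPGs.
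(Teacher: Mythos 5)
Your proof is correct and takes essentially the same approach as the paper's: an A*-style argument resting on the invariant that the queue always retains a node from which the optimal deadlock-free TPG is producible (such a node's reduced TPG is a subgraph of the acyclic optimum, so \textsc{CycleDetection} never prunes it), with Lemma~\ref{lemma:partial} supplying admissibility, Theorem~\ref{theorem:existence} supplying existence, and finiteness of the branching tree supplying termination. Your additions --- the explicit leaf bijection and the observation that cyclic producible TPGs have infinite cost, which reconciles the minimum over all producible TPGs with the minimum over deadlock-free ones --- are sound elaborations of points the paper leaves implicit, not a different argument.
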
%
\begin{proof}
    First, SES always terminates within a finite time because there are only finitely many possible operation sequences from $\mathcal{G}^\mathcal{S}_{\text{root}}$ to any TPG, each corresponding to a node that can possibly be added to $\mathcal{Q}$. Second, Theorem~\ref{theorem:existence} ensures that there always exist solutions for SES since $\mathcal{G}^\mathcal{S}_{\text{root}}$ is constructed as in Construction~\ref{construction}. Therefore, to prove the completeness of SES, we just need to prove the following claim: At the beginning of each while-loop iteration, for any deadlock-free $\mathcal{G}^\mathcal{S}_{\text{root}}$-producible TPG $\mathcal{G}$, there exists $\mathcal{G}^\mathcal{S} \in \mathcal{Q}$ such that $\mathcal{G}$ is $\mathcal{G}^\mathcal{S}$-producible.
    Here, we abuse the notation $\mathcal{G}^\mathcal{S} \in \mathcal{Q}$ to denote a node in $\mathcal{Q}$ with STPG $\mathcal{G}^\mathcal{S}$. 
    This claim holds inductively: At the first iteration, $\mathcal{G}^\mathcal{S}_{\text{root}} \in \mathcal{Q}$. During any iteration, if some $\mathcal{G}^\mathcal{S} \in \mathcal{Q}$ such that $\mathcal{G}$ is $\mathcal{G}^\mathcal{S}$-producible is popped on \Cref{line:SES:pop}, then one of the following must hold:%
    \begin{itemize}
        \item $\mathcal{G}^\mathcal{S}$ contains no switchable edge, i.e., $\mathcal{G}^\mathcal{S} = \mathcal{G}$: SES terminates, and the inductive step holds vacuously.
        \item $\mathcal{G}$ is $\mathcal{G}^\mathcal{S}_f$-producible: Since $\mathcal{G}$ is acyclic, so is $red(\mathcal{G}^\mathcal{S}_f)$. Thus, $\mathcal{G}^\mathcal{S}_f$ is added into $\mathcal{Q}$.
        \item $\mathcal{G}$ is $\mathcal{G}^\mathcal{S}_r$-producible: This is symmetric to the above case.
    \end{itemize}%
    In any case, the claim remains true after this iteration. Therefore, SES always outputs a solution within a finite time. 

     Finally, we prove that the output TPG $\mathcal{G}$ has the minimum cost. Assume towards contradiction that when $\mathcal{G}$ is returned, there exists $\mathcal{G}^\mathcal{S}_0 \in \mathcal{Q}$ that can produce a better TPG $\mathcal{G}_\text{better}$ with $cost(\mathcal{G}_\text{better}) < cost(\mathcal{G})$. Yet this is impossible since Lemma \ref{lemma:partial} implies that such $\mathcal{G}^\mathcal{S}_0$ must have a smaller $g+h$ value and thus would be popped from $\mathcal{Q}$ before $\mathcal{G}$. 
\end{proof}

\begin{figure*}
    \centering
    \includegraphics[width=0.94\linewidth]{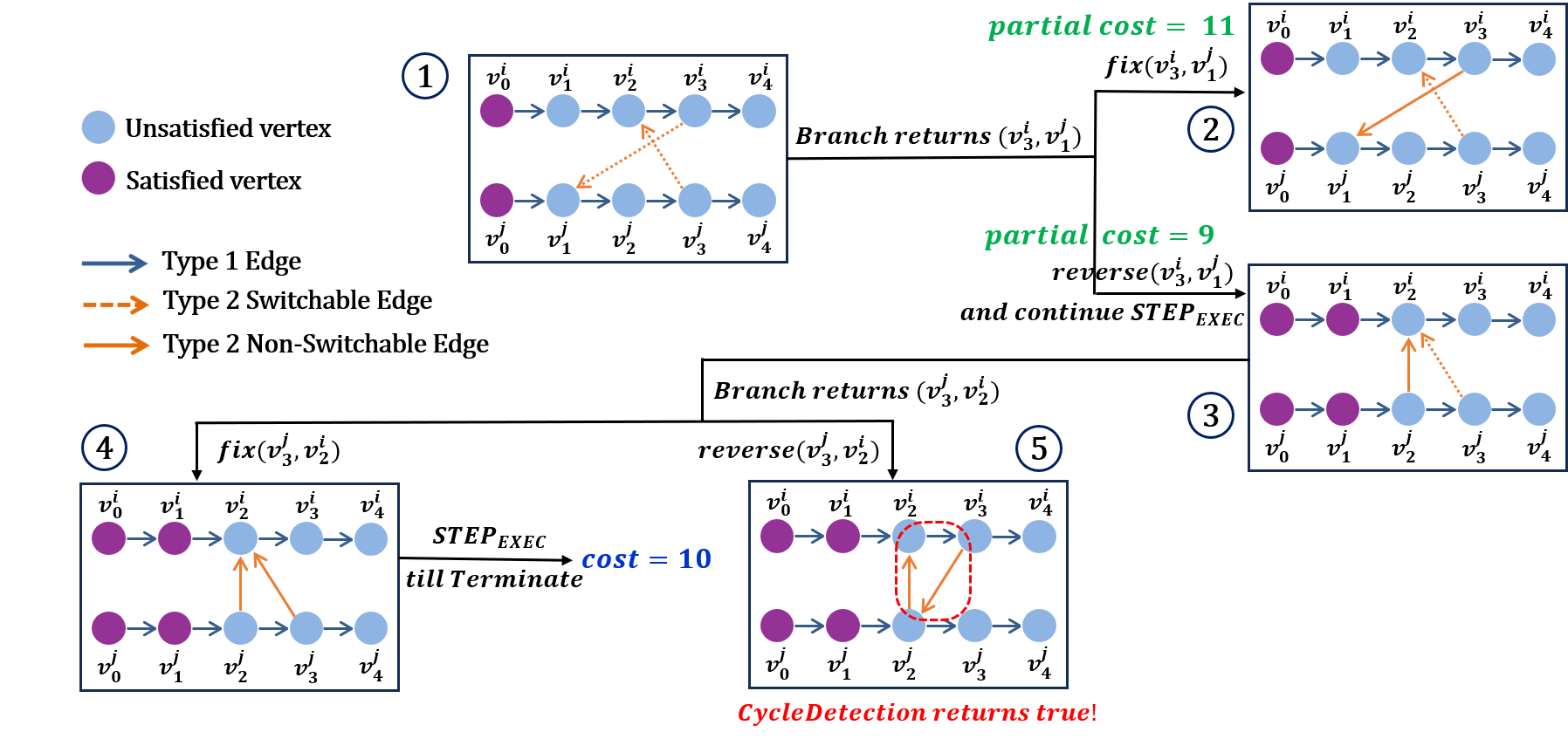}
    \caption{Example of running ESES on the top-left STPG. 
    The circled numbers denote the order of generating these STPGs. 
    }\label{figure:example3}
\end{figure*}
\begin{algorithm}[t!]
\SetAlgorithmName{Module}{algorithmautorefname}{list of algorithms name}
\caption{Execution-based Modules for ESES}\label{module:execution}
    Auxillary information $\mathcal{X}$ is a map $\mathcal{X}: \mathcal{A} \to [0:zi]$, where $\mathcal{X}[i]$ records the index of the most recently satisfied vertex for agent $i$\;

    $\mathcal{X}_\text{init}$ on \Cref{line:SES:root-h} of \Cref{algorithm:framework} maps all agents to $0$;

    $\textsc{Init}_\textsc{exec}$ in Procedure \ref{procedure:execute} is modified by setting  $\mathcal{V}_0$ to
        $\{v^i_k: i \in \mathcal{A}, k \leq \mathcal{X}[i]\}$ instead of $\{v_0^i : i \in \mathcal{A}\}$\;
        
    \Fn{\textsc{Branch}$(\mathcal{G^S}=(\mathcal{V}, \mathcal{E}_1, (\mathcal{S}_{\mathcal{E}2}, \mathcal{N}_{\mathcal{E}2})), \mathcal{X})$}
    {
        \textsc{Init$_{\textsc{EXEC}}$}($\mathcal{V}, \mathcal{X}$)\;\label{line:ESES:call-step}
        $\mathcal{X}' \gets \mathcal{X}$\;
        \While{$\exists v \in \mathcal{V} : v$ \normalfont{unsatisfied}\label{line:ESES:while}}{
            \ForAll{$i \in \mathcal{A} : \mathcal{X}'[i] < zi$\label{line:ESES:select-edge}}
            {
                $v \gets v^i_{\mathcal{X}'[i]+1}$\tcp*{First unsatisfied vertex}
                \If{$\exists e \in \mathcal{S}_{\mathcal{E}2}$ with  $e=(v, u)$ or $(u, v)$\label{line:branch:switchable}}{ 
                    \Return $(\mathcal{X}', cost, e)$\;\label{line:ESES:select-edge-end}
                }
            }
            $\mathcal{S} \gets$ \textsc{Step$_{\textsc{exec}}$}$(red(\mathcal{G^S}))$\;       
            \ForAll{$v^i_{k} \in \mathcal{S}$\label{line:branch:for-s}}{
                Mark $v^i_{k}$ as satisfied\;\label{line:branch:mark-satisfied}
                $\mathcal{X}'[i]\gets k$\;
            }
        }
    \Return $(\mathcal{X}', cost, \textsc{Null})$\;\label{line:branch:return}   
    }
    \Fn{\textsc{Heuristic}$(\mathcal{G^S}, \mathcal{X})$}
    {    
    \Return $(\textsc{Exec}(red(\mathcal{G^S})), \mathcal{X})$\;
    
    }
\end{algorithm}

\section{Execution-based Modules}

In this and the next sections, we describe two sets of modules and prove that they satisfy Assumption \ref{assumption}. We start with describing a set of ``execution-based'' modules in Module \ref{module:execution} and refer to it as \emph{Execution-based SES} (ESES).

In essence, ESES simulates the execution of the STPG and branches when encountering a switchable edge. It uses $\mathcal{X}$ to record the index of the most recently satisfied vertex for every agent, indicating their current locations. $\mathcal{X}$ is updated by the \textsc{Branch} module, which largely ensembles \textsc{Exec} in Procedure \ref{procedure:execute}. At the beginning of each while-loop iteration of \textsc{Branch}, ESES first checks whether the next vertex of any agent is involved in a switchable edge and, if so, returns that edge together with the updated $\mathcal{X}'$ and the cost of moving agents from the old $\mathcal{X}$ to the new $\mathcal{X}'$ [\Crefrange{line:ESES:select-edge}{line:ESES:select-edge-end}], where the cost is updated inside function $\textsc{Step$_{\textsc{exec}}$}$. If no such edge is found, it runs \textsc{Step$_{\textsc{exec}}$} on the reduced TPG to move agents forward by one timestep and repeat the process. 

\begin{example}
\Cref{figure:example3} shows an example of ESES.\footnote{We note that \Cref{figure:example3} also works as an example for the GSES implementation in the following section. The only difference is that GSES does not use the notion of ``(un)satisfied vertex'' or $\textsc{step}_\textsc{exec}$.} 
We start with the top-left STPG \textcircled{1} containing two switchable edges. ESES looks at a ``horizon'' containing the first unsatisfied vertices $v^i_1$ and $v^j_1$, and then picks the adjacent switchable edge $(v^i_3, v^j_1)$ to branch on. This leads to two copies of STPGs \textcircled{2} and \textcircled{3}, containing non-switchable edge $(v^i_3, v^j_1)$ or $(v^j_2, v^i_2)$, respectively. ESES expands on STPG \textcircled{3} first as it has a smaller $g+h$ value. The next switchable edge it encounters is $(v^j_3, v^i_2)$. ESES first $fix$es it and generates TPG \textcircled{4} with $cost = 10$, which is the optimal solution. When ESES $reverse$s the edge, the resulting TPG \textcircled{5} is pruned as it contains a cycle. Note that STPG \textcircled{2} will not be expanded since it has a partial cost greater than the cost of TPG \textcircled{4}. 
\end{example}

\begin{prop}\label{proposition:satisfy1}
    Module \ref{module:execution} satisfies Assumption \ref{assumption}.
\end{prop}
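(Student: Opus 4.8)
The plan is to verify the two clauses of \Cref{assumption} separately: clause A1 concerns only the \textsc{Branch} module, while clause A2 concerns how the accumulated $g$ produced by \textsc{Branch} combines with the $h$ produced by \textsc{Heuristic}. Throughout I would exploit the standing fact (from the discussion after \Cref{algorithm:framework}) that every popped node has an acyclic reduced TPG, so that executing $red(\mathcal{G^S})$ is deadlock-free by \Cref{lemma:cycle}.

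For A1, I would first record that \textsc{Branch} returns a triple of the required shape and that any returned edge $e$ lies in $\mathcal{S}_{\mathcal{E}2}$ by the guard on \Cref{line:branch:switchable}, so it remains only to show it returns \textsc{Null} exactly when no switchable edge exists. The key invariant is that, during the while-loop, no endpoint of a switchable edge is ever marked satisfied: a vertex is advanced past only inside \textsc{Step$_{\textsc{exec}}$}, which is reached only after the switchable-edge guard has \emph{failed} for every agent's first unsatisfied vertex, so a vertex adjacent to a switchable edge can never be the one that gets satisfied. Hence both endpoints of every switchable edge stay unsatisfied. Since $red(\mathcal{G^S})$ is deadlock-free, every \textsc{Step$_{\textsc{exec}}$} call with unsatisfied vertices makes progress, so if a switchable edge exists, one of its (permanently unsatisfied) endpoints must eventually become some agent's first unsatisfied vertex and trigger a return; if none exists, \textsc{Branch} runs the full deadlock-free execution to completion and returns \textsc{Null}. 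This establishes A1.

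For A2, I would prove $g+h=cost(red(\mathcal{G^S}))$ by induction along the search tree. By the modified \textsc{Init$_{\textsc{exec}}$}, the value $h=\textsc{Exec}(red(\mathcal{G^S}))$ computed from the stored $\mathcal{X}$ is exactly the cost of \emph{completing} $red(\mathcal{G^S})$ from the state in which all $v^i_k$ with $k\le\mathcal{X}[i]$ are pre-satisfied, i.e.\ the suffix cost past the horizon $\mathcal{X}$. The accumulated $g=\sum g_\Delta$ is, by the cost counter of \Cref{procedure:execute} that \textsc{Branch} inherits, the total agent-timesteps spent driving the horizon from $\mathbf{0}$ up to $\mathcal{X}$ along the ancestors. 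Since the per-iteration counter is additive over timesteps, it suffices to prove a prefix/suffix decomposition: executing $red(\mathcal{G^S})$ from scratch costs the prefix cost of reaching $\mathcal{X}$ plus the suffix cost of completing from $\mathcal{X}$, and the stored $g$ equals that prefix cost evaluated on $red(\mathcal{G^S})$ itself.

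The main obstacle is precisely this last equality, because $g$ was accumulated on the reduced TPGs of the node's \emph{ancestors}, which differ from $red(\mathcal{G^S})$ by the Type 2 edges inserted by the intervening $fix$ and $reverse$ operations. I would therefore isolate a sub-lemma asserting that each such operation leaves the prefix execution (everything satisfied up to the horizon at which that edge was branched) unchanged. The $fix$ case is clean: \textsc{Branch} returns an edge exactly when one endpoint first reaches the horizon, so the retained edge constrains only still-unsatisfied vertices and points strictly forward of the horizon. The $reverse$ case is the delicate one, since by \Cref{definition:switchableTPG} reversing $(v_{s+1}^j,v_k^i)$ inserts $(v_{k+1}^i,v_s^j)$, whose head $v_s^j$ could in principle already be satisfied at the horizon (the situation where agent $j$ has arrived at the shared location but not yet left it). I expect the heart of the proof to be showing that such a backward-pointing insertion never survives into the queue---either the branching discipline guarantees the edge is discovered through its still-unsatisfied endpoint so the inserted edge again points forward of the horizon, or the reversal closes a cycle (through the Type 1 link $v_s^j\!\to\!v_{s+1}^j$) and is discarded by \textsc{CycleDetection}. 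Once this forward-pointing invariant is secured, the prefix cost is operation-invariant, the telescoping $g+h=cost(red(\mathcal{G^S}))$ closes, and, combined with \Cref{lemma:partial}, yields A2.
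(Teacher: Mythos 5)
Your handling of A1 is correct (and more careful than the paper, which disposes of A1 as holding ``by design''), and your plan for A2 is the same induction the paper uses: $g$ accumulates the prefix cost up to the horizon $\mathcal{X}$, $h$ is the suffix cost returned by \textsc{Exec} from $\mathcal{X}$, and one concludes $g+h$ equals the partial cost. You have also put your finger on exactly the step the paper's two-line induction silently skips: the prefix recorded in $g$ was accumulated on the reduced TPGs of \emph{ancestor} nodes, so one must show that each edge inserted by an intervening $fix$ or $reverse$ leaves that prefix a valid execution prefix of the descendant's reduced TPG. Your $fix$ case is sound: the invariant from your A1 argument keeps both endpoints of the branched edge unsatisfied, so the retained edge only constrains vertices beyond the horizon.

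The gap is the $reverse$ case, where your proposed dichotomy is false. The guard on \Cref{line:branch:switchable} protects the two endpoints $v_{s+1}^j$ and $v_k^i$, but not $v_s^j$, the head of the inserted edge $(v_{k+1}^i, v_s^j)$. If agent $j$'s simulated execution reaches the shared location before agent $i$ is adjacent to it, then $\textsc{Step}_{\textsc{exec}}$ satisfies $v_s^j$ (it touches no switchable edge), and the guard fires only one iteration later, when the tail $v_{s+1}^j$ becomes $j$'s first unsatisfied vertex. So the edge \emph{is} discovered through a still-unsatisfied endpoint, yet the inserted edge points backward of the horizon: the inference in your first horn breaks. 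Your second horn fails too, because no cycle need arise. Concretely, let $j$ follow $A\to B\to C$ and $i$ follow $D\to B\to E$, with $i$ delayed at $D$ and the original plan sending $j$ through $B$ first; the single switchable edge is discovered through its tail after $j$'s vertex at $B$ is satisfied, and reversing it yields the single acyclic edge requiring $j$ to enter $B$ only after $i$ has left $B$, which \textsc{CycleDetection} keeps. For that child, \textsc{Heuristic} pre-satisfies $j$'s vertex at $B$, so the inserted edge is vacuous in the suffix and $g+h$ strictly undercounts the partial cost (executed from scratch, $j$ must now stall before $B$ until the delayed agent $i$ passes). Hence the equality demanded by A2 fails for this node; in this instance both children even receive the same $f$-value, so with unlucky tie-breaking SES returns the costlier TPG. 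This is not a step a cleverer argument can close for Module~\ref{module:execution} as written; it is also exactly the point glossed over by the paper's proof, which treats ``the cost of moving agents from their start locations to $\mathcal{X}$'' as well-defined independently of which reduced graph it is measured on.
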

\begin{proof}
    Assumption~\ref{assumption:branch} holds by design. 
    To prove Assumption~\ref{assumption:heuristic}, we first prove the following claim by induction: for every node $(\mathcal{G^S}, \mathcal{X}, g, h) \in \mathcal{Q}$,
    $g$ is the cost of moving agents from their start locations to $\mathcal{X}$. This holds for the root node with $g=0$ and $\mathcal{X}[i]=0, \forall i \in \mathcal{A}$. When we expand a node $(\mathcal{G}, \mathcal{X}, g , h) \in \mathcal{Q}$, $g_\Delta$ returned by \textsc{Branch} is the cost of moving agents from $\mathcal{X}$ to $\mathcal{X}'$. Thus, on \Cref{line:SES:insert1,line:SES:insert2} of \Cref{algorithm:framework}, the $g$ value of the child nodes are $g+g'$, which is the cost of moving agents from their start locations to $\mathcal{X}'$ ($=\mathcal{X}_f=\mathcal{X}_r$). So our claim holds. Module \textsc{heuristic} runs function \textsc{Exec} to compute the cost of moving agents from $\mathcal{X}'$ to their goal locations on the reduced TPG, making $g+h$ the partial cost of $\mathcal{G^S}$ for every node $(\mathcal{G^S}, \mathcal{X}, g , h) \in \mathcal{Q}$.   
\end{proof}

\section{Graph-based Modules}

We now introduce an alternative set of modules that focus on the graph properties of a TPG. We refer to this implementation as \emph{Graph-based SES} (GSES). We will see later in our experiment that this shift of focus significantly improves the efficiency of SES. We start by presenting the following crucial theorem that provides a graph-based approach to computing the cost of a TPG.

Given a TPG $\mathcal{G}$ and a vertex $v \in \mathcal{V}$, let $lp(v)$ denote the longest path among the longest paths from every vertex $v^i_0, i \in \mathcal{A}$ to vertex $v$ on $\mathcal{G}$ and $|lp(v)|$ denote its length.  

\begin{thm}\label{theorem:longestPath}   
    When we execute a TPG, every vertex $v$ is marked as satisfied in the $|lp(v)|^{\text{th}}$ iteration of the while-loop of \textsc{Exec} in Procedure~\ref{procedure:execute}.
\end{thm}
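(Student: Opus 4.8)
The plan is to show that the while-loop iteration in which each vertex becomes satisfied obeys exactly the same recurrence as the longest-path length, and then to match the two by a single induction. Throughout I assume the TPG is deadlock-free, hence acyclic by \Cref{lemma:cycle}; for a cyclic TPG the vertices on a cycle are never satisfied and the statement is vacuous for them. For a vertex $v$ let $\sigma(v)$ denote the iteration of the while-loop in which $v$ is marked as satisfied, adopting the convention $\sigma(v_0^i)=0$ for the start vertices marked during $\textsc{init}_\textsc{exec}$, and let $N^-(v)$ denote the set of in-neighbors of $v$ under $\mathcal{E}_1 \cup \mathcal{E}_2$. The goal is to prove $\sigma(v)=|lp(v)|$ for every $v$.

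First I would pin down the base case. The start vertices $v_0^i$ have no incoming edges: they have no Type-1 predecessor, and a Type-2 edge into $v_0^i$ would require $t_{s+1}^j < t_0^i = 0$, which is impossible. Conversely, every $v_k^i$ with $k\ge 1$ carries the Type-1 in-edge $(v_{k-1}^i, v_k^i)$. Hence the sources of the DAG are exactly $\{v_0^i : i \in \mathcal{A}\}$, for which $|lp(v_0^i)|=0=\sigma(v_0^i)$.

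The crux is to establish, for every non-source vertex $v$, the recurrence
\begin{equation*}
\sigma(v) = 1 + \max_{w \in N^-(v)} \sigma(w),
\end{equation*}
directly from Procedure~\ref{procedure:execute}. I would argue two inequalities. For ``$\ge$'', observe that $\textsc{step}_\textsc{exec}$ adds $v_k^i$ to $\mathcal{S}$ only when all of its Type-2 in-neighbors are already satisfied, and that $v_k^i$ can be the selected first-unsatisfied vertex only after its Type-1 predecessor $v_{k-1}^i$ is satisfied; since at most one vertex per agent is marked per iteration, $v_{k-1}^i$ is in fact satisfied in a strictly earlier iteration. Thus every in-neighbor of $v$ is satisfied before $v$, giving $\sigma(v) \ge 1 + \max_{w \in N^-(v)} \sigma(w)$. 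For ``$\le$'', set $t = 1 + \max_{w \in N^-(v)} \sigma(w)$; at the start of iteration $t$ all in-neighbors of $v$ are satisfied, and by the ``$\ge$'' direction $v$ is itself still unsatisfied, so $v$ is the first unsatisfied vertex of its agent and its Type-2 condition holds, whence $\textsc{step}_\textsc{exec}$ marks it in iteration $t$.

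Finally, the longest-path length in a DAG satisfies the identical recurrence $|lp(v)| = 1 + \max_{w \in N^-(v)} |lp(w)|$ with the identical base values $|lp(v_0^i)|=0$, since every source-to-$v$ path ends with an edge from some in-neighbor and every non-source vertex is reachable from its own start vertex along Type-1 edges. A strong induction on $|lp(v)|$ then yields $\sigma(v)=|lp(v)|$ for all $v$. I expect the main obstacle to be the careful bookkeeping around the first-unsatisfied-vertex mechanic: justifying that the Type-1 predecessor is satisfied in a strictly earlier iteration (the at-most-one-per-agent-per-iteration observation) and that, once ready, $v$ is neither satisfied prematurely nor skipped. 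Once the recurrence for $\sigma$ is secured, matching it to the longest-path recurrence and running the induction is routine.
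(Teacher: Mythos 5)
Your proof is correct, and it is organized differently from the paper's. The paper runs a single induction on the iteration counter $t$ with the two-sided invariant that, after iteration $t$, the satisfied vertices are exactly those with $|lp(v)| \le t$; the delicate half (every $v$ with $|lp(v)| > t$ remains unsatisfied) is handled by a longest-path-specific trick, namely taking the in-neighbor of $v$ lying on $lp(v)$, whose $lp$-value is $|lp(v)|-1 \ge t$ and which is therefore unsatisfied by the inductive hypothesis. You instead prove a recurrence characterization of the satisfaction iteration, $\sigma(v) = 1 + \max_{w \in N^-(v)} \sigma(w)$, with both inequalities derived purely from the semantics of Procedure~\ref{procedure:execute}, observe that $|lp(\cdot)|$ obeys the identical recurrence with identical base values at the sources, and match the two by induction on $|lp(v)|$. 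The substance is the same---both arguments turn on the fact that a vertex is marked exactly one iteration after the last of its in-neighbors---but your decomposition buys some things the paper leaves implicit: it records that the sources of the DAG are exactly the start vertices, that deadlock-freeness/acyclicity is needed for $lp$ to be well defined (and that the claim is vacuous on a cycle), and the timing subtlety that in-neighbors must be satisfied in strictly \emph{earlier} iterations (your at-most-one-mark-per-agent observation; equivalently, markings within an iteration are applied only after the step function returns). The paper's version, in exchange, is more compact: one induction, no need to name $\sigma$ or prove two inequalities separately.
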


\begin{proof}
    We induct on iteration $t$ and prove that all vertices $v$ with $|lp(v)|=t$ are marked as satisfied in the $t^{\text{th}}$ iteration. In the base case, $\{v^i_0, i \in \mathcal{A}\}$ are the vertices with $|lp(v)|=0$ and are marked as satisfied in the $0^{\text{th}}$ iteration. In the inductive step, we assume that, by the end of the $(t-1)^{\text{th}}$ iteration, all vertices $v$ with $|lp(v)| < t$ are satisfied, and all vertices $v$ with $|lp(v)| \ge t$ are unsatisfied. Then, in the ${t}^{\text{th}}$ iteration, every vertex $v$ with $|lp(v)| = t$ is marked as satisfied because all of its in-neighbors $v'$ have $|lp(v')| < |lp(v)| = t$ and are thus satisfied. For every vertex $v$ with $|lp(v)| > t$, the vertex right before $v$ on $lp(v)$, denoted as $v'$, has $|lp(v')| = |lp(v)| - 1 \geq t$ and is thus unsatisfied on \Cref{line:exec:call-step}. Thus, every vertex $v$ with $|lp(v)| > t$ has at least one in-neighbor unsatisfied and thus remains unsatisfied. Therefore, the theorem holds.  
\end{proof}

Hence, the last vertex $v_{zi}^i$ of every agent $i \in \mathcal{A}$ is marked as satisfied in the $|lp(v_{zi}^i)|^{\text{th}}$ iteration, namely the travel time of agent $i$ is $|lp(v_{zi}^i)|$. We thus get the following corollary.

\begin{cor}\label{corollary:partial-cost}
    Given a TPG $\mathcal{G}$, $cost(\mathcal{G})=\sum_{i \in \mathcal{A}} |lp(v_{zi}^i)|$. 
\end{cor}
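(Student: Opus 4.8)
The plan is to read the corollary off directly from \Cref{theorem:longestPath} together with the way $cost(\mathcal{G})$ is accumulated by the blue counter in Procedure~\ref{procedure:execute}. First I would recall the behavior of that counter: at the top of each while-loop iteration, $\textsc{step}_\textsc{exec}$ increments $cost$ by one for every agent $i$ whose goal vertex $v^i_{zi}$ is still unsatisfied. Hence the final value of the counter equals $\sum_{t \ge 1} n_t$, where $n_t$ denotes the number of agents whose goal vertex is unsatisfied at the start of the $t^{\text{th}}$ while-loop iteration.

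Next I would fix a single agent $i$ and count exactly how many iterations contribute on its behalf. By \Cref{theorem:longestPath}, the goal vertex $v^i_{zi}$ is marked as satisfied precisely in iteration $|lp(v^i_{zi})|$; consequently it is unsatisfied at the start of iterations $1, 2, \dots, |lp(v^i_{zi})|$ and satisfied from then on. Therefore agent $i$ is counted in exactly those $|lp(v^i_{zi})|$ iterations, contributing $|lp(v^i_{zi})|$ to the counter---this is just the observation, already stated before the corollary, that the travel time of $i$ equals $|lp(v^i_{zi})|$. Exchanging the order of summation then yields $cost(\mathcal{G}) = \sum_{t \ge 1} n_t = \sum_{i \in \mathcal{A}} |lp(v^i_{zi})|$, which is the claim.

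The only thing requiring care---and the main (minor) obstacle---is the bookkeeping on the iteration index. I must stay consistent with the convention used in the proof of \Cref{theorem:longestPath}, where iteration $0$ refers to the initialization step $\textsc{init}_\textsc{exec}$ (which marks $\mathcal{V}_0$ but does not touch the counter), while the counter is incremented only during the while-loop iterations $t \ge 1$. Under this convention, the statement ``$v^i_{zi}$ is satisfied in iteration $|lp(v^i_{zi})|$'' translates precisely into ``$v^i_{zi}$ is unsatisfied at the start of iterations $1$ through $|lp(v^i_{zi})|$,'' which is exactly the range over which agent $i$ is tallied. I would also note in passing that this is consistent with the definition $cost(\mathcal{G}) = \sum_{i \in \mathcal{A}} t^i_{zi}$ inherited from \Cref{assumption:MAPF}, since each travel time $t^i_{zi}$ has now been identified with $|lp(v^i_{zi})|$.
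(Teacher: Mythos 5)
Your proposal is correct and follows essentially the same route as the paper: the paper derives the corollary directly from \Cref{theorem:longestPath} via the observation that agent $i$'s travel time equals $|lp(v_{zi}^i)|$, and then sums over agents. Your version merely spells out the bookkeeping of the blue $cost$ counter (counting, per iteration, the agents with unsatisfied goal vertices and exchanging the order of summation), which is a more explicit rendering of the same argument, with the iteration-indexing convention handled correctly.
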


An interesting observation is that, if $lp(u) < lp(v)$ for a given TPG $\mathcal{G}$, then adding edge $(u, v)$ to $\mathcal{G}$ does not change its cost since adding $(u, v)$ does not change any longest paths. We thus get the following corollary that is useful later.

\begin{cor}\label{corollary:fix-edge}
    Given a STPG $\mathcal{G^S}$, we compute $lp(v)$ on $red(\mathcal{G^S})$. For any switchable edge with $lp(u) < lp(v)$, $fix$ing it does not change the partial cost of $\mathcal{G^S}$. 
\end{cor}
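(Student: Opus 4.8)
The plan is to reduce the statement to the informal observation stated immediately before it --- that adding an edge $(u,v)$ with $|lp(u)| < |lp(v)|$ to an acyclic TPG leaves every longest-path length unchanged --- and then to invoke \Cref{corollary:partial-cost}. First I would observe that, by \Cref{definition:reduceTPG} and the definition of $fix$, fixing a switchable edge $(u,v)$ is exactly the operation of moving $(u,v)$ into the non-switchable set, so $red(fix(\mathcal{G^S},(u,v)))$ is just $red(\mathcal{G^S})$ with the single extra edge $(u,v)$. Writing $\mathcal{G}_0 = red(\mathcal{G^S})$ and $\mathcal{G}_1 = red(fix(\mathcal{G^S},(u,v)))$, the partial costs of $\mathcal{G^S}$ and $fix(\mathcal{G^S},(u,v))$ are $cost(\mathcal{G}_0)$ and $cost(\mathcal{G}_1)$; by \Cref{corollary:partial-cost} it then suffices to show $|lp(v_{zi}^i)|$ agrees in $\mathcal{G}_0$ and $\mathcal{G}_1$ for every agent $i$. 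I would in fact prove the stronger claim that $|lp(w)|$ is unchanged for \emph{every} vertex $w$ (here I read the hypothesis $lp(u)<lp(v)$ as $|lp(u)|<|lp(v)|$).

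The core claim is that $|lp_{\mathcal{G}_1}(w)| = |lp_{\mathcal{G}_0}(w)|$ for all $w$. Since $\mathcal{G}_1$ contains all edges of $\mathcal{G}_0$, the inequality $|lp_{\mathcal{G}_1}(w)| \ge |lp_{\mathcal{G}_0}(w)|$ is immediate, so the work is the reverse direction. I would prove it by induction along a topological order of $\mathcal{G}_1$, using the longest-path recurrence $|lp(w)| = \max_{(x,w)}(|lp(x)|+1)$ (and $0$ if $w$ is a source). The inductive step splits on whether $w=v$: when $w \neq v$ the in-neighbors of $w$ are identical in the two graphs, so equality follows from the induction hypothesis; when $w = v$ the only new in-edge is $(u,v)$, giving $|lp_{\mathcal{G}_1}(v)| = \max\bigl(|lp_{\mathcal{G}_0}(v)|,\, |lp_{\mathcal{G}_1}(u)|+1\bigr)$, and since $u$ precedes $v$ the induction hypothesis yields $|lp_{\mathcal{G}_1}(u)| = |lp_{\mathcal{G}_0}(u)| = |lp(u)|$, so the new term satisfies $|lp_{\mathcal{G}_1}(u)|+1 \le |lp(v)| = |lp_{\mathcal{G}_0}(v)|$ and the longest path to $v$ does not grow.

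The step I expect to be the main obstacle is ensuring that longest paths remain well-defined after the insertion, i.e., that $\mathcal{G}_1$ is still acyclic, since both \Cref{corollary:partial-cost} and the recurrence above presuppose a DAG and a topological order. I would discharge this directly from the hypothesis: a cycle through the new edge would require a path from $v$ back to $u$ in $\mathcal{G}_0$, which would force $|lp(u)| \ge |lp(v)|+1$, contradicting $|lp(u)| < |lp(v)|$. This same fact does double duty in the induction, as it also shows that no longest path to $u$ can route through $(u,v)$, justifying the equality $|lp_{\mathcal{G}_1}(u)| = |lp_{\mathcal{G}_0}(u)|$ used above. Summing the unchanged values $|lp(v_{zi}^i)|$ over all agents and applying \Cref{corollary:partial-cost} gives $cost(\mathcal{G}_1) = cost(\mathcal{G}_0)$, which is precisely the assertion that fixing the edge leaves the partial cost of $\mathcal{G^S}$ unchanged.
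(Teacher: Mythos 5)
Your proof is correct and takes essentially the same route as the paper, which obtains the corollary by combining \Cref{corollary:partial-cost} with the observation (stated informally just before the corollary) that adding an edge $(u,v)$ with $lp(u) < lp(v)$ changes no longest paths. The paper leaves that observation unproved, whereas you supply the missing details---the topological-order induction and the verification that the reduced TPG stays acyclic---so your write-up is a rigorous elaboration of the paper's argument rather than a different approach.
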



We adopt the following well-known algorithm to compute $lp(v)$ on a given deadlock-free TPG $\mathcal{G}=(\mathcal{V}, \mathcal{E}_1, \mathcal{E}_2)$:
(1) Set $lp(v) =0, \forall v \in \mathcal{V}$.
(2) Compute a topological sort of all vertices in $\mathcal{V}$.
(3) For each vertex $v$ in the topological order, we set $lp(u) = \max\{lp(u), lp(v) + 1\}$ for every out-neighbor $u$ (namely $(v, u) \in \mathcal{E}_1 \cup \mathcal{E}_2$).  
The time complexity of this longest-path algorithm is $\mathcal{O}(\lvert \mathcal{V} \rvert + \lvert \mathcal{E}_1 \cup \mathcal{E}_2 \rvert)$. 

\begin{algorithm}[t!]
 \SetAlgorithmName{Module}{algorithmautorefname}{list of algorithms name}
 \caption{Graph-based Modules for GSES}
 \label{module:graph}

Auxillary information $\mathcal{X}$ is a map $\mathcal{X}: \mathcal{V} \to [0, \lvert \mathcal{V} \rvert)$, which records $lp(v)$ for every vertex $v$\;

$\mathcal{X}_\text{init}$ on \Cref{line:SES:root-h} of \Cref{algorithm:framework} is empty\;


\Fn{\textsc{Branch}$(\mathcal{G^S}=(\mathcal{V}, \mathcal{E}_1, (\mathcal{S}_{\mathcal{E}2}, \mathcal{N}_{\mathcal{E}2})), \mathcal{X})$}
{
    \If{$\exists (u, v) \in \mathcal{S}_{\mathcal{E}2} :\mathcal{X}[u] \geq \mathcal{X}[v]$}{
        \Return ($\mathcal{X}, 0, (u, v))$\;
    }
    $fix$ all swtichable edges in $\mathcal{G^S}$\;
    \Return ($\mathcal{X}$, 0, \textsc{Null})\;
}


\Fn{\textsc{Heuristic}$(\mathcal{G^S}, \mathcal{X})$}
{
$\mathcal{X}' \gets$ $lp$-values of all vertices in $\mathcal{V}$ on $red(\mathcal{G^S})$\;

\Return $(\sum_{i \in \mathcal{A}} \mathcal{X}[v^i_{zi}], \mathcal{X}')$\;

}
\end{algorithm}

With this algorithm, we specify the graph-based modules in Module~\ref{module:graph}.
In GSES, $\mathcal{X}$ records $lp(v)$ for every vertex $v \in \mathcal{V}$ and is updated by the \textsc{Heuristic} module. Since, with $\mathcal{X}$, \textsc{Heuristic} can directly compute the partial cost of a given STPG, GSES does not use any $g$ values.
The $\textsc{Branch}$ module chooses a switchable edge $(u, v)$ with $\mathcal{X}[u] \geq \mathcal{X}[v]$ to branch on. If no such edge exists, then, by Corollary~\ref{corollary:fix-edge}, $fix$ing all switchable edges produces a TPG with the same cost as the current partial cost. Thus, GSES $fix$ all such edges and terminates in this case.

\begin{prop}\label{proposition:satisfy2}
    Module \ref{module:graph} satisfies Assumption \ref{assumption}.
\end{prop}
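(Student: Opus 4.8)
The plan is to check the two clauses of Assumption~\ref{assumption} for Module~\ref{module:graph}, exploiting the fact that GSES carries no meaningful $g$-value. First I would record the invariant that $g=0$ for every node in $\mathcal{Q}$: the root is inserted with $g=0$, and every call to \textsc{Branch} returns $g_\Delta=0$, so the children inserted on \Cref{line:SES:insert1,line:SES:insert2} again have $g+g_\Delta=0$. A trivial induction then gives $g\equiv 0$, so that verifying Assumption~\ref{assumption:heuristic} amounts to showing that the reported $h$ equals the partial cost $cost(red(\mathcal{G}^\mathcal{S}))$ of each node's STPG.

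For Assumption~\ref{assumption:heuristic}, I would lean entirely on Corollary~\ref{corollary:partial-cost}, which says $cost(\mathcal{G})=\sum_{i\in\mathcal{A}}|lp(v^i_{zi})|$ for any deadlock-free TPG. The \textsc{Heuristic} module runs the standard longest-path routine on $red(\mathcal{G}^\mathcal{S})$ to fill the auxiliary map with $lp(v)$ for every vertex and returns the sum of these values over the goal vertices $v^i_{zi}$. The one fact to nail down by induction over node generation is that the auxiliary map stored in a node indeed holds the $lp$-values of that node's own reduced TPG; this is immediate because \textsc{Heuristic} recomputes them directly on $red(\mathcal{G}^\mathcal{S})$, and because \textsc{CycleDetection} keeps every reduced TPG in the search acyclic, so the longest-path computation is well defined. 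Applying Corollary~\ref{corollary:partial-cost} to $red(\mathcal{G}^\mathcal{S})$ then gives $h=cost(red(\mathcal{G}^\mathcal{S}))$, and with $g=0$ this is exactly the partial cost, establishing Assumption~\ref{assumption:heuristic}.

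For Assumption~\ref{assumption:branch}, the output shape is read off directly: \textsc{Branch} returns $(\mathcal{X},0,(u,v))$ with $(u,v)\in\mathcal{S}_{\mathcal{E}2}$ whenever some switchable edge satisfies $\mathcal{X}[u]\ge\mathcal{X}[v]$, and $(\mathcal{X},0,\textsc{Null})$ otherwise. The delicate case is that \textsc{Branch} may report \textsc{Null} while switchable edges remain, namely when every remaining switchable edge has $lp(u)<lp(v)$. Here I would invoke Corollary~\ref{corollary:fix-edge}: $fix$ing such an edge leaves the partial cost unchanged, and in fact leaves every $lp$-value unchanged (adding $(u,v)$ with $lp(u)<lp(v)$ creates no longer path to $v$), so the inequality $lp(u')<lp(v')$ persists for all the other residual edges; hence fixing the whole batch---exactly the ``$fix$ all switchable edges'' step preceding the \textsc{Null}---preserves the partial cost and turns $\mathcal{G}^\mathcal{S}$ into a genuine TPG whose cost equals the node's $f$-value. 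I would also note that each such fix preserves acyclicity, since a cycle through a newly added $(u,v)$ would require a path from $v$ to $u$ in the reduced TPG and thus $lp(u)\ge lp(v)+1>lp(v)$, contradicting $lp(u)<lp(v)$; by Lemma~\ref{lemma:cycle} the returned TPG is therefore deadlock-free.

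I expect the main obstacle to be this last point rather than Assumption~\ref{assumption:heuristic}, which is essentially mechanical once Corollary~\ref{corollary:partial-cost} and $g\equiv 0$ are in hand. Specifically, the subtlety is reconciling GSES's early \textsc{Null} (issued while cost-neutral switchable edges still exist) with the framework's assumption that a node terminates the search only when it has no switchable edges; the reconciliation is carried entirely by Corollary~\ref{corollary:fix-edge} together with the short acyclicity argument above, which together guarantee that the batch-fixed STPG is a deadlock-free, $\mathcal{G}^\mathcal{S}$-producible TPG of the same partial cost.
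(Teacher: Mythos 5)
Your proof is correct and follows essentially the same route as the paper's: Assumption~\ref{assumption:heuristic} via Corollary~\ref{corollary:partial-cost} (with the trivial $g\equiv 0$ invariant), and Assumption~\ref{assumption:branch} by the module's design together with Corollary~\ref{corollary:fix-edge} for the early-termination case. The paper's own proof is just two sentences and leaves the early-termination subtlety and the acyclicity of batch-fixing to the surrounding discussion of Module~\ref{module:graph}; your explicit argument that fixing edges with $lp(u)<lp(v)$ preserves all $lp$-values and hence cannot create cycles is a welcome filling-in of detail the paper omits, not a different approach.
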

\begin{proof}
    Assumption~\ref{assumption:branch} holds by design. Assumption~\ref{assumption:heuristic} holds because of Corollary~\ref{corollary:partial-cost}.
\end{proof}

\begin{figure*}[h]
    \centering
    \includegraphics[width=0.4\linewidth]{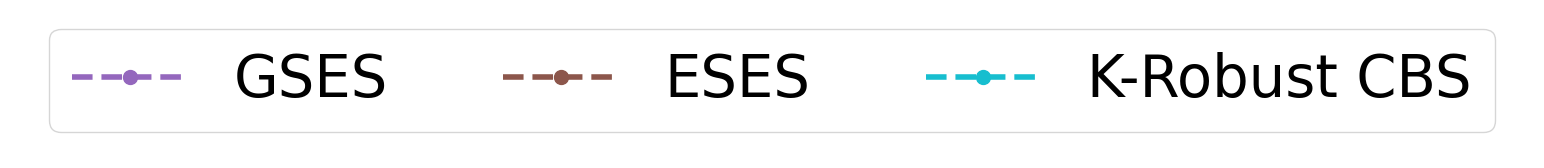}
\includegraphics[width=.93\linewidth]{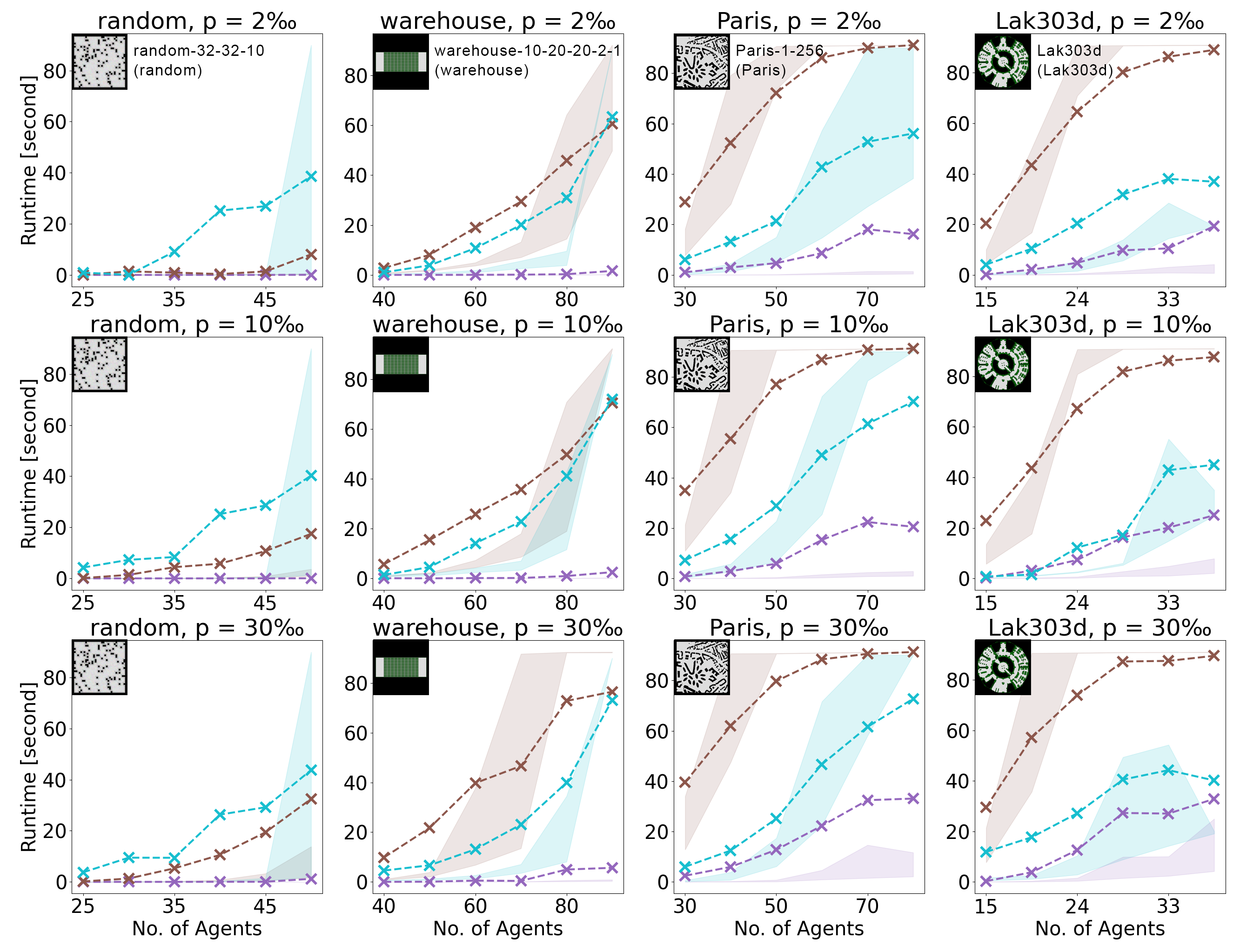}
    \caption{Runtime of ESES, GSES, and k-robust CBS. The dashed lines represent the mean of runtime, and the shaded areas denote the 0.4 to 0.6 quantile range. For trials that exceed the $90$-second time limit, we count it as $90$ seconds.}
    \label{figure:main}
\end{figure*}

\begin{remark}\label{remark:term}
    ESES terminates when all vertices are satisfied, which is possible only when all Type 2 edges are non-switchable. This means that ESES has to expand on all switchable edges before getting a solution. In contrast, GSES can have an \emph{early termination} when $fix$ing all switchable edges does not change any longest paths.
\end{remark}

\begin{figure}[th!]
    \centering
    \includegraphics[width=\linewidth]{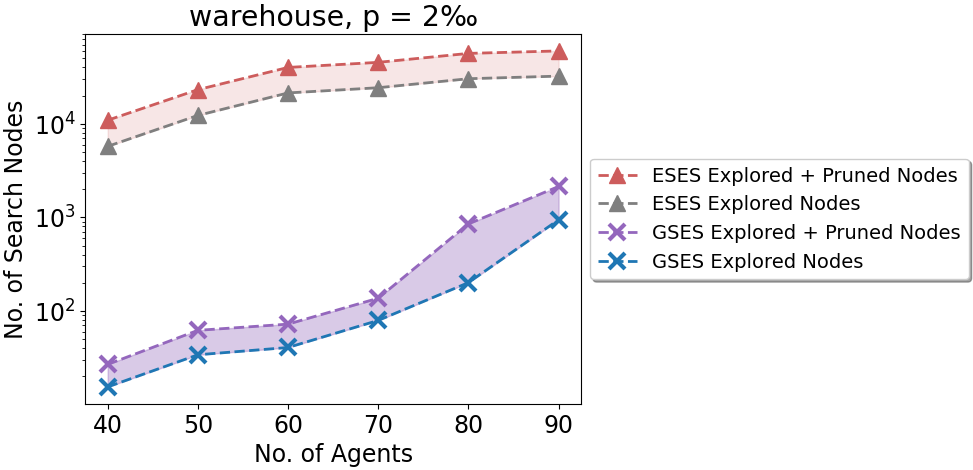}%
    \caption{Numbers of nodes explored and pruned by ESES and GSES on the warehouse map. Dashed lines represent the mean values. Shaded area between two lines for the same algorithm indicates the portion of pruned nodes.}\label{figure:compare}%
\end{figure}

\begin{figure*}
    \centering
    \includegraphics[width=\linewidth]{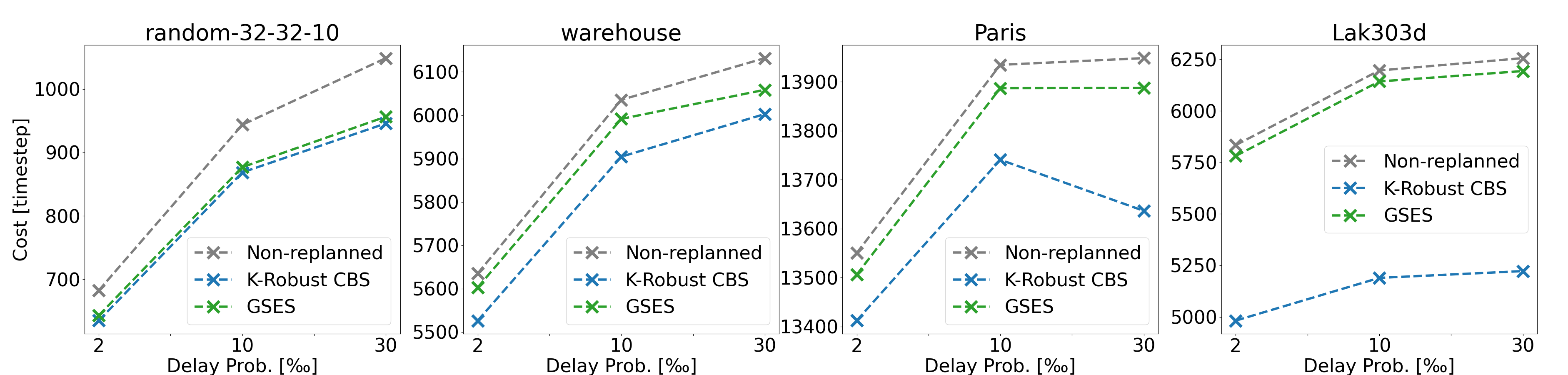}%
    \caption{Mean costs (from the locations where the delays happened to the goal locations) of the non-replanned, GSES-replanned, and K-Robust-CBS-replanned solutions. The means are taken across all trials for all different numbers of agents. }\label{figure:improve}%
\end{figure*}

\section{Experiment}

We use $4$ maps from the MAPF benchmark suite \cite{SternSOCS19}, shown in \Cref{figure:main}, with 6 agent group sizes per map. 
Regarding each map and group size configuration, we run the algorithms on 25 different, evenly distributed scenarios (start/goal locations) with 6 trials per scenario. We set a runtime limit of 90 seconds for each trial. In each trial, we execute the TPG constructed from an optimal MAPF solution planned by a k-Robust MAPF solver k-Robust CBS~\cite{k-robust-sym} with $k=1$.
At each timestep of the execution, each agent that has not reached its goal location is subject to a constant probability $p$ of delay. When a delay happens, we draw a delay length $\Delta$ uniformly random from a range $[10, 20]$, construct a STPG as in Construction \ref{construction}, and run our replanning algorithms. We also develop a baseline that uses k-Robust CBS to find the new optimal solution (that takes into account the delay length $\Delta$) when the delay happens.

We implement all algorithms in C++\footnote{Our SES code is available at
\url{https://github.com/YinggggFeng/Switchable-Edge-Search}. The modified k-Robust CBS code that considers delays is available at \url{https://github.com/nobodyczcz/Lazy-Train-and-K-CBS/tree/wait-on-start}.} and run experiments on a server with a 64-core AMD Ryzen Threadripper 3990X and 192 GB of RAM.

\paragraph{Efficiency}
\Cref{figure:main} compares the runtime of ESES and GSES with replanning using k-Robust CBS. 
In all cases, GSES runs the fastest. Most remarkably, on the random and warehouse maps, the runtime of GSES is consistently below 1 second and does not increase significantly when the number of agents increases, suggesting the potential of GSES for real-time replanning applications.

\paragraph{Comparing ESES and GSES}
We observe from \Cref{figure:main} that, although ESES and GSES adopt the same framework, GSES runs significantly faster than ESES. This is because the longest paths used in GSES defines a simple but extremely powerful early termination condition (see Remark \ref{remark:term}), which enables GSES to find an optimal solution after a very small number of node expansions. \Cref{figure:compare} compares the number of search nodes of ESES and GSES, where \emph{explored nodes} are nodes popped from the priority queue, and \emph{pruned nodes} are nodes pruned by $\textsc{CycleDetection}$. The gap between the red and grey lines (and the gap between the purple and blue lines) indicates the effectiveness of cycle detection for pruning unnecessary search nodes. The gap between the grey and blue lines indicates the effectiveness of early termination as described in Remark \ref{remark:term}.

\paragraph{Improvement of Solution Cost}
\Cref{figure:improve} measures the cost of our replanned solution, in comparison to the non-replanned solution produced by the original TPG and the replanned solution produced by k-Robust CBS. We stress that our solution is guaranteed to be optimal, as proven in previous sections, upon sticking to the original location-wise paths, while k-Robust CBS finds an optimal solution that is independent of the original paths. Therefore, the two algorithms solve intrinsically different problems, and the results here serve primarily for a quantitative understanding of how much improvement we can get by changing only the passing orders of agents at different locations. \Cref{figure:improve} shows that the cost improvement depends heavily on the maps. For example, our solutions have costs similar to the globally optimal solutions on the random map, while the difference is larger on the Lak303d map.

\section{Conclusion}

We proposed Switchable Edge Search to find the optimal passing orders for agents that are planned to visit the same location. We developed two implementations based on either execution (ESES) or graph (GESE) presentations. On the random and warehouse maps, the average runtime of GSES is faster than 1 second for various numbers of agents. On harder maps (Paris and game maps), it also runs faster than replanning with a k-Robust CBS algorithm.


\appendix

\newtheorem{innercustomgeneric}{\customgenericname}
\providecommand{\customgenericname}{}
\newcommand{\newcustomtheorem}[2]{%
  \newenvironment{#1}[1]
  {%
   \renewcommand\customgenericname{#2}%
   \renewcommand\theinnercustomgeneric{##1}%
   \innercustomgeneric
  }
  {\endinnercustomgeneric}
}

\newcustomtheorem{customthm}{Theorem}
\newcustomtheorem{customlemma}{Lemma}
\newcustomtheorem{customprop}{Proposition}
\newcustomtheorem{customcor}{Corollary}

\section{Appendix: Proofs for Section 3}

We rely on the following lemma to prove Proposition~\ref{proposition:cost}.

\begin{customlemma}{A}\label{lemma:plan}
        For every tuple $(l_{k}^i, t^i_k)$ in every path $p_i \in \mathcal{P}$, the corresponding vertex $v^i_k$ in $\mathcal{G}$ is satisfied after the $(t^i_k)^{\text{th}}$ iteration of the while-loop of \textsc{Exec} in Procedure~\ref{procedure:execute}. 
\end{customlemma}%
\begin{proof}
    We induct on the while-loop iteration $t$ and prove that any vertex $v^i_k$ with $t^i_k \le t$ is satisfied after iteration $t$. When $t = 0$, this holds because of $\textsc{init}_\textsc{exec}$. Assume that all vertices $v^i_k$ with $t^i_k \le t-1$ are satisfied after iteration $t-1$. At iteration $t$, for Type 1 edge $(v^i_{k-1}, v^i_k)$, $v^i_{k-1}$ is satisfied as $t^i_{k-1} < t^i_{k} \leq t$. For any Type 2 edge $(v^j_s, v^i_k)$, $v^j_s$ is also satisfied as $t^j_{s} < t^i_k \leq t$ by Definition~\ref{definition:TPG}. This shows that all in-neighbors of $v^i_k$ are satisfied after iteration $t-1$, thus $v^i_k$ is satisfied after iteration $t$.
\end{proof}
    
\begin{customprop}{1}[Cost]
    $cost(\mathcal{G}) \le cost(\mathcal{P})$.
\end{customprop}%
\begin{proof}
    Lemma \ref{lemma:plan} implies that the last vertex $v_{zi}^i$ of every agent $i$ is satisfied after the $(t^i_{zi})^{\text{th}}$ iteration, i.e., when executing $\mathcal{G}$, the travel time of every agent $i$ is no greater than $t^i_{zi}$. Thus, the proposition holds.
\end{proof}

Proposition~\ref{proposition:collision} is similar to Lemma 4 in \cite{honig2016} with different terms. We include a proof for completeness.

 \begin{customprop}{2}[Collision-Free]
    Executing a TPG does not lead to collisions among agents.
\end{customprop}
\begin{proof}
    According to Procedure~\ref{procedure:execute}, we need to show that, when a vertex $v_{k}^i$ is marked as satisfied [\Cref{line:exec:mark-satisfied}], moving agent $i$ to its $k^{th}$ location $l^i_k$ does not lead to collisions.
    Assume towards contradiction that agent $i$ indeed collides with another agent $j$ at timestep $t$. Let $v^i_k$ and $v^j_s$ be the latest satisfied vertices for agents $i$ and $j$, respectively, after the $t^{th}$ iteration of the while-loop. 
    If $i$ and $j$ collide because they are at the same location, then $l^i_k = l^j_s$, indicating that either edge $(v^i_{k+1}, v^j_s)$ or edge $(v^j_{s+1}, v^i_k)$ is in $\mathcal{E}_2$. But this is impossible as neither $v^i_{k+1}$ nor $v^j_{s+1}$ is satisfied. 
    
    If they collide because $j$ leaves a location at timestep $t$, and $i$ enters the same location at timestep $t$, then $v^i_{k}$ and $v^j_{s}$ are marked as satisfied exactly at the $t^{th}$ iteration with $l^i_{k} = l^j_{s-1}$, indicating that either $(v^i_{k}, v^j_s)$ or $(v^j_{s+1}, v^i_{k-1})$ is in $\mathcal{E}_2$. But this is also impossible as neither $v^i_{k}$ nor $v^j_{s+1}$ was satisfied before the $t^{th}$ iteration. 
\end{proof}

\begin{customlemma}{3}[Deadlock $\iff$ Cycle]
    Executing a TPG encounters a deadlock iff the TPG contains cycles.
\end{customlemma}

\begin{proof}
 If a TPG $\mathcal{G}$ has a cycle, executing it will encounter a deadlock as no vertices in the cycle can be marked as satisfied. If executing $\mathcal{G}$ encounters a deadlock in the $t^{\text{th}}$ iteration of the while-loop, we prove that $\mathcal{G}$ has a cycle by contradiction. 
 Let $\mathcal{V}'$ denote the set of unsatisfied vertices, which is non-empty by \Cref{definition:deadlock}. 
 If $\mathcal{G}$ is acyclic, then there exists a topological ordering 
 of $\mathcal{V}'$, and $\mathcal{S}$ must contain the first vertex in the topological ordering as all of its in-neighbors are satisfied, contradicting the deadlock condition of $\mathcal{S} = \emptyset$.
 \end{proof}

\begin{customlemma}{4}[Deadlock-Free]   
    If a TPG is constructed from a MAPF solution, then executing it is deadlock-free.
\end{customlemma}
\begin{proof}
    If a deadlock is encountered, then the execution would enter the while-loop for infinitely many iterations, and $cost$ strictly increases in each iteration. Thus, $cost(\mathcal{G}) = \infty$. Yet, $cost(\mathcal{P})$ is finite, contradicting Proposition \ref{proposition:cost}.
\end{proof}



\section{Acknowledgement}
The research at Carnegie Mellon University was supported by the National Science Foundation (NSF) under Grant 2328671.
The views and conclusions contained in this document are those of the authors and should not be interpreted as representing the official policies, either expressed or implied, of the sponsoring organizations, agencies, or the U.S. government.

\bibliography{main}

\end{document}